\documentclass[lettersize,journal]{IEEEtran}

\IEEEoverridecommandlockouts                              

\usepackage[utf8]{inputenc}
\usepackage{amsmath}
\usepackage{amssymb}
\usepackage{amsfonts}
\usepackage{bm}

\usepackage{amsthm}
\usepackage{overpic}
\usepackage{lipsum}
\usepackage{enumitem}
\usepackage{cases}
\usepackage{subcaption}
\usepackage{graphicx}
\usepackage{ulem}
\usepackage{xcolor}
\usepackage{cite}

\usepackage{algorithm,algorithmic}
\usepackage{mathbbol}
\usepackage{comment}

\usepackage[colorlinks = true,
            linkcolor = red,
            urlcolor  = magenta,
            citecolor = green,
            anchorcolor = blue]{hyperref}

\DeclareMathOperator*{\argmax}{arg\,max}

{
    \theoremstyle{plain}
    \newtheorem{assumption}{Assumption}
}

\DeclareCaptionLabelFormat{lc}{\MakeLowercase{#1}~#2}

\newtheorem{theorem}{Theorem}
\newtheorem{corollary}{Corollary}

\newtheorem{definition}{Definition}
\newtheorem{proposition}{Proposition}
\newtheorem{example}{Example}[]
\newtheorem{problem}{Problem}
\newcommand{\edit}[1]{\textcolor{black}{#1}}

\makeatletter
\def\maketag@@@#1{\hbox{\m@th\normalfont\normalsize#1}}
\makeatother

\title{\LARGE \bf
Cooperative Risk-Aware Exploration in Heterogeneous Multi-Robot Systems Using Algorithmic Altruism
}

\author{Brooks A. Butler$^1$, Jair Cert\'{o}rio$^2$, Jo\~{a}o P. Hespanha$^3$, and Magnus Egerstedt$^4$
\thanks{$^1$Brooks A. Butler is with the School of Electrical and Computer Engineering at Oklahoma State University, Stillwater, OK, 74074, USA (Email: {\tt\small brooks.butler@okstate.edu}).
}%
\thanks{$^2$Jair Certório is with the Systems Engineering Division, Instituto Tecnológico de Aeronáutica, Fortaleza, CE, 60415-513, Brazil (Email: {\tt\small certorio@ita.br}).}%
\thanks{$^3$Jo\~{a}o P. Hespanha is with the Electrical and Computer
Engineering Department, University of California, Santa Barbara, CA, 93106,
USA (Email: {\tt\small hespanha@ece.ucsb.edu}).}%
\thanks{$^4$Magnus Egerstedt is with the Department of Computer Science at the University of North Carolina, Chapel Hill, NC, 27599, USA (Email: {\tt\small magnus@unc.edu).}
}%
\thanks{This research was supported in part by an appointment to the Intelligence Community Postdoctoral Research Fellowship Program at the University of California, Irvine, administered by the Oak Ridge Institute for Science and Education (ORISE) through an interagency agreement between the U.S. Department of Energy and the Office of the Director of National Intelligence (ODNI).}%
}

\begin{document}


\maketitle
\thispagestyle{empty}
\pagestyle{empty}

\begin{abstract}
Multi-robot systems are well-positioned for exploration in hazardous environments, but effective deployment requires deciding not only where robots should gather information, but also how risk should be distributed across heterogeneous team members. This paper develops a game-theoretic framework for cooperative risk-aware exploration based on ecologically inspired altruistic behavior. Each robot selects a finite-horizon trajectory to maximize information gain while penalizing redundant exploration and expected hazard exposure. Heterogeneity is introduced through agent-specific value parameters for encoding altruistic coupling, which is modeled through relatedness weights inspired by Hamilton's rule. We introduce a game-theoretic structure for trajectory planning that defines a Social Nash Equilibrium, which modifies the utility of agent actions according to agent relatedness. This utility shaping causes agents to internalize the effect of their trajectory choices on teammates, encouraging lower-valued robots to accept risk when doing so benefits higher-valued agents and improves team performance. We define an exploration utility for agents that rewards area coverage and uncertainty reduction, while also penalizing redundancy and risk, enabling projected gradient-based waypoint optimization in a receding-horizon planner. Simulations show that altruistic planning reduces redundant exploration, improves inter-robot separation, and reallocates risk according to agent value while maintaining comparable map coverage. We further demonstrate the approach in hardware experiments, where planned waypoints are tracked by wheeled robots using single-integrator controllers and barrier certificates.
\end{abstract}

\section{Introduction}

Multi-robot systems are frequently proposed for deployment in hazardous or uncertain environments because their distributed nature provides inherent robustness to individual robot failures \cite{hazon2008redundancy,yan2013survey,lee2025distributed,wehbe2021probabilistic}. If one robot becomes damaged or disabled, the remaining members of the team can often continue the mission, in contrast to single-robot deployments where such failures may be mission-ending. However, this redundancy raises the question about how risk should be allocated across a team of autonomous agents. In particular, when robots operate in environments with spatially varying hazards, it may be advantageous for some robots to deliberately accept higher levels of risk if doing so improves the overall success of the mission. This observation motivates the study of altruistic decision-making mechanisms, in which individual robots may incur local costs or risks when such maneuvers yield sufficiently large benefits to the collective performance of the team. 

In this paper, we consider a heterogeneous team of risk-aware robots in an exploration mission, where the objective for each team member is to select an exploration trajectory that simultaneously maximizes information gained while minimizing expected agent loss and redundant exploration with other agents.
Inspired by ecological principles of inclusive fitness~\cite{hamilton1963evolution, endler1986natural}, we propose a framework that allows for increased risk-taking by individual robots as long as it benefits the team more than it costs the individual.

Exploration of hazardous environments that would pose undue risks to humans is a well-suited task for multi-robot systems, particularly in applications such as search-and-rescue missions \cite{shree2021exploiting, queralta2020collaborative,drew2021multi,baxter2007multi}. In these settings, robots must reason not only about where to explore, but also about the risks associated with operating in uncertain or dangerous regions. Disregarding risk in such scenarios can lead to higher rates of mission failure and damage to robotic assets, motivating planning strategies that explicitly account for hazard exposure \cite{xiao2020robot,di2022risk}. Recent research has therefore explored decision-making frameworks that optimize criteria beyond minimizing expected risk, enabling systems to better account for rare but high-impact events \cite{jiang2022risk,zhou2022risk}. At the same time, exploration planning should seek to maximize the information gathered from the environment, since efficient information acquisition reduces the travel required to fully cover a map \cite{jadidi2015mutual,carrillo2015autonomous} and guarantees eventual coverage of all reachable unexplored space \cite{julian2014mutual,guo2023patroller,duan2026maximizing}. Balancing information gain with risk exposure is therefore a central challenge in multi-robot exploration.

When multiple robots operate in the same environment, their decisions become inherently coupled. In particular, robots may gather redundant information when visiting similar regions, and risk may be unevenly distributed across the team depending on the trajectories selected by each agent. A natural modeling framework for such interactions is game theory, where robots are treated as decision-making agents seeking to maximize their utilities \cite{lavalle1993game,huo2023task,emery2005game}. Noncooperative game formulations are especially attractive because they permit decentralized implementations in which each robot independently selects its actions. However, when robots optimize only their individual utilities, the resulting behaviors may lead to inefficient collective outcomes that degrade the overall success of the mission, suggesting that some degree of coordination or cooperation is beneficial \cite{roldan2018CompeteOrCooperate,xiao2024task,karam2025resource,butlerCDC2025hamiltonsrule}.

While noncooperative games rely on Nash equilibria as their primary solution concept, cooperative games often lack consensus on solution concepts and can be significantly more computationally demanding \cite{deng1994ComplexityCoop}. One common approach is to modify individual utilities to incorporate terms that capture the effects of an agent’s decisions on others \cite{stirling2005social1,stirling2005social2}. Such formulations combine a private utility component with additional terms that encourage cooperative behavior and improve system-level outcomes \cite{bester1998altruism,de2011altruistic,tobias2023rational}. Importantly, these modified utilities still allow agents to make decisions independently, while implicitly encouraging forms of altruistic behavior through the structure of their incentives. Questions regarding when individuals should incur personal costs to benefit others have also been extensively studied in mathematical biology \cite{ale2013evolution,su2024relational}. In particular, Hamilton \cite{hamilton1963evolution} proposed that individuals sufficiently similar should display altruistic behavior, being willing to incur small losses if doing so provides sufficiently large benefits to their group.

\subsection*{Contributions}
This paper introduces a game-theoretic framework for cooperative risk-aware exploration in heterogeneous multi-robot teams based on principles of ecologically inspired algorithmic altruism. Inspired by Hamilton’s rule from evolutionary biology, we propose an altruistic utility-shaping mechanism that allows robots to internalize the impact of their trajectory choices on the utilities of other agents. By defining inter-agent relatedness through value-based risk sensitivities, we show that the resulting interaction induces a weighted potential game whose maximizers correspond to Pareto-optimal solutions of the underlying exploration problem. We further show that decentralized fictitious play over trajectory choices converges to these equilibria under mild assumptions. Finally, we demonstrate how this framework can be instantiated in a continuous exploration setting that accounts for information gain, sensing redundancy, and spatially varying risk, and verify its real-time efficacy on hardware through robotic experiments using small wheeled robots.

\section{Problem Statement} \label{sec:problem_statement}

Consider a team of $n$ mobile robots indexed by $i~\in~[n]$, where $[n] = \{1,\dots,n\}$, operating over a bounded exploration domain $\Omega \subset \mathbb{R}^d, \quad d \in \{2,3\}$. Consider a discrete-time trajectory through the exploration domain $\Omega$ by a given agent~$i$, where each robot selects a finite-horizon discrete-time trajectory of length $T \in \mathbb{N}$. The feasible trajectory space for agent~$i$ is then given by sets of time-ordered points in the exploration domain $\Omega$ that are constrained by the agents dynamics, given formally as
\begin{equation}
    \begin{aligned}
        \mathcal{X}_i = \{x_i \in [T] \times \Omega  &:  \exists a_i \in \mathcal{A}_i, \forall t \in [T] \\ 
         & \;\; \text{ s.t. } x_i(t+1) = f_i(x_i(t), a_i) \},
    \end{aligned}
\end{equation}
where we assume that agent dynamics can be modeled in discrete-time as
\begin{equation}\label{eq:agent_dynamics}
    x_i(t+1) = f_i(x_i(t), a_i),
\end{equation}
with $a_i \in \mathcal{A}_i$ corresponding to the set of feasible exploration actions, or inputs, that an agent may choose to explore $\Omega$ between time steps $t$ and $t+1$.

The joint finite horizon trajectory profile for all agents is denoted by 
\begin{equation*}
    x = \big(x_1, \dots, x_n\big) \in \mathcal{X} =  \prod_{i \in [n]} \mathcal X_i,
\end{equation*}
where $\mathcal{X} \subseteq  [T] \times \Omega^{n}$.

The environment to be explored by agent~$i$ is characterized by an information utility $I_i : \Omega^n \to \mathbb{R}_{\ge 0}$,
which quantifies the expected information utility gained by agent~$i$ given the trajectories of all agents. Examples of such information densities could include map~\cite{bjorke1996framework}, Gaussian process variance~\cite{o2012gaussian}, or task-relevance scores~\cite{nagami2026vista}.
The total information utility gained by robot $i$ along its trajectory is given by
\begin{equation}
    \mathcal I_i\big(x(\cdot)\big) = \sum_{t=1}^T I_i\!\big(x_i(t), x_{-i}(\cdot)\big).
\end{equation}

Since robots may collect redundant information when their trajectories overlap in space, we introduce a redundancy penalty for joint exploration trajectories. To model this effect, we define a spatial overlap kernel $K : \Omega \times \Omega \to \mathbb{R}_{\ge 0},$
where $K(x_i,x_j)$ measures the degree of sensing overlap between locations $x_i$ and $x_j$.
The redundancy penalty incurred by robot $i$ due to overlap with other robots is defined as
\begin{equation}\label{eq:redundancy}
    \mathcal R_i\big(x(\cdot)\big) = \sum_{j \in [n]} \sum_{t=1}^T \sum_{s=1}^T K\!\big(x_i(t), x_j(s)\big).
\end{equation}

Additionally, each agent is subject to risk, modeled by a risk field $p_i : \Omega^n \to [0,1],$
representing the instantaneous probability of failure or hazard given the joint trajectories of the system.
Each robot $i$ is assigned a positive scalar $\lambda_i >~0$, representing its value or risk sensitivity. The expected cumulative loss incurred by robot $i$ along its trajectory is

\begin{equation}
    \mathcal L_i\big(x_i(\cdot)\big) = \lambda_i \sum_{t=1}^T p_i\!\big(x_i(t), x_{-i}(\cdot)\big).
\end{equation}

Given these metrics, we define the utility accrued by robot $i$ from a joint trajectory profile $x(\cdot\,; \theta)$ as
\begin{equation}
u_i\big(x(\cdot)\big)
=
\mathcal I_i\big(x(\cdot)\big)
-
\alpha\, \mathcal R_i\big(x(\cdot)\big)
-
\mathcal L_i\big(x(\cdot)\big),
\label{eq:individual_utility}
\end{equation}
where $\alpha > 0$ is a weighting parameter penalizing redundant exploration. Note that the utilities are directly coupled across robots through the redundancy term $\mathcal R_i$.

We can describe the joint utility of the system with the vector $u(x(\cdot)) = \big(u_1(x(\cdot)), \dots, u_n(x(\cdot))\big)$ of individual utilities. We consider a social welfare function $W : \mathbb{R}^n \to \mathbb{R}$ that aggregates individual utilities into a system-level objective. A commonly used special case is the utilitarian welfare $W(u) = \sum_{i=1}^n u_i$.
A joint trajectory profile $x^\star(\cdot)$ is said to be \textit{Pareto-optimal} if there exists no feasible $x(\cdot)$ such that $u_i(x(\cdot)) \ge u_i(x(\cdot)),\ \forall i$, with strict inequality for at least one $i$.

Thus, the objective of this work is to design decentralized decision-making mechanisms such that the robots’ trajectory choices converge to Pareto-optimal solutions of the coupled exploration problem, defined concretely as follows:

\begin{problem} \label{prob:optimal}
   Given the exploration domain $\Omega$, information density $I$, agent risk fields $\{p_i\}_{i=1}^n$, and overlap kernel $K$, design a control law over trajectories $x_i(\cdot)$ such that the resulting joint trajectories converge to Pareto-optimal solutions of the coupled exploration problem without centralized coordination. 
\end{problem}
This problem is particularly challenging due to the coupled nature of the utilities induced by overlapping trajectories, heterogeneous risk sensitivities across robots, and the absence of centralized planning or global optimization.
The coupled structure of the utilities in \eqref{eq:individual_utility} suggests that purely self-interested decision making may lead to inefficient or highly redundant exploration behaviors. 
To address this challenge, we draw inspiration from ecological models of altruism, most notably Hamilton’s rule \cite{hamilton1963evolution}, which characterizes when an individual should incur a personal cost to benefit others based on a notion of relatedness. 

In the multi-robot setting, this perspective motivates the introduction of agent-specific altruistic preferences that weight the utilities of neighboring agents according to measures of relative value or shared welfare. We show that embedding such altruistic structure into a game-theoretic learning framework—specifically, fictitious play over trajectory choices—induces dynamics that align individual incentives with system-level efficiency. 

\section{Altruism in Risk-Aware Exploration} \label{sec:altruism_explore}
In this section, we present a game-theoretic framework by which agents can solve Problem~\ref{prob:optimal} by leveraging the coupled utility and asymmetric risk exposure across agents of different value (i.e., $\lambda_i \neq \lambda_j$ for some $i,j \in [n]$). Further, we show that, by formulating the shared objective of the system as an altruistic social utility function, inspired by ecology, we can induce a potential game structure whose equilibrium yeilds a system-level optimal solution according to the social welfare. 

\subsection{Hamilton’s Rule and Altruistic Decision-Making}

In evolutionary biology, altruistic behavior, or kin selection, can be described through Hamilton’s rule \cite{hamilton1963evolution,butlerCDC2025hamiltonsrule,karam2025resource}, which characterizes when an individual should incur a personal cost to benefit another. Specifically, an altruistic action is favored when
$C_i < r_{ij} B_j$,
where $C_i$ denotes the cost incurred by the donor, $B_j$ denotes the benefit received by the recipient, and $r_{ij}$ is a measure of genetic relatedness between the two individuals.

While originally derived to explain genetic selection, Hamilton’s rule can be interpreted more broadly as a decision-making template for agents operating in coupled environments. The rule formalizes the idea that an agent may rationally accept a local loss if it generates sufficiently large benefits for others with whom it shares alignment or interdependence (i.e., relatedness between agents). Importantly, the notion of relatedness need not be genetic, but may instead reflect task relevance, shared objectives, or asymmetric importance within a team.

In multi-robot exploration, individual trajectory choices generate externalities through redundant sensing and interference, while risk is borne locally and asymmetrically. This structure naturally lends itself to an altruistic interpretation, where a robot may deviate from its individually optimal trajectory to reduce redundancy or risk exposure for others, provided the collective benefit outweighs its personal cost as follows. 

Let the cost of agent~$i$ deviating from its current exploration trajectory $x_i$ to an alternative trajectory $x_i'$ be defined as
\begin{equation}
    C_i(x_i, x_i') = u_i(x_i, x_{-i}) - u_i(x_i', x_{-i}), 
\end{equation}
where $x_{-i}$ is the set of trajectories for all agents $j \neq i$. Likewise, define the benefit received by agent~$j$ due to the deviation of agent~$i$ as
\begin{equation}
    B_{ij}(x_i, x_i') = u_j(x_i', x_{-i}) - u_j(x_i, x_{-i}).
\end{equation}
Thus, given a non-negative measure of relatedness between agent $i$ and $j$ as $\gamma_{ij} \geq 0$, we can define a Hamilton's rule-like condition that evaluates the social benefit of a unilateral trajectory deviation by agent~$i$ as
\begin{equation} \label{eq:hamiltions_rule}
     C_i(x_i, x_i') < \sum_{j \neq i}  \gamma_{ij} B_{ij}(x_i, x_i').
\end{equation}
In other words, the trajectory $x_i'$ is strictly preferred over $x_i$ if the collective weighted benefit of agent~$i$ choosing $x_i'$ is greater than the cost incurred by agent~$i$.
When \eqref{eq:hamiltions_rule} does \textit{not} hold for any alternative trajectory $x_i'$, altruistic agents have no incentive to deviate from their trajectories $x_i$ and we say that we have a social Nash Equilibrium, which we define formally as follows:
\begin{definition}[Social Nash Equilibria (SNE)] \label{def:sne}
    If $x^*(\cdot) \in [T] \times \Omega^{n}$ is a SNE, then for all $x(\cdot) \in [T] \times \Omega^{n}$ it holds that
    \begin{equation} \label{eq:SNE}
        v_i(x^*)-v_i(x_i,x_{-i}^*) \geq 0, \; \forall i \in [n],
    \end{equation}
    where
    \begin{equation}
        v_i(x) = u_i(x) + \sum_{j \neq i} \gamma_{ij} \, u_j(x).
        \label{eq:altruistic_utility}
    \end{equation}
\end{definition}
\noindent
Further, if we collect the relatedness weights $\gamma_{ij}$ into a matrix $\Gamma \in \mathbb{R}^{n \times n}_{\geq 0}$, where $\gamma_{ii} = 1$ for all $i$, we can compute the \textit{social utility vector} for the system as
\begin{equation}
    v(x) = \Gamma u(x).
\end{equation}

Given our definition of an SNE, we can imagine several types of games that might arise given a choice of $\Gamma$. If $\Gamma = I$ we recover a noncooperative game, as the social utility vector equals the vector of private utilities, which renders the SNE to be a standard Nash equilibrium by definition. If $\Gamma = 1_{n \times n}$, a matrix of ones, we obtain a cooperative game that aims to maximize the average utility to agents.
If we instead consider $\Gamma$ to be a block diagonal matrix with matrices of ones along its diagonal, $\Gamma = \text{blkdiag}(1_{m_1 \times m_1},1_{m_2 \times m_2},\dots,1_{m_N \times m_N} )$, we obtain an $N$-coalition game, where the agents group into $N$ fixed coalitions, with each coalition cooperating to maximize their average private utility, but being noncooperative across coalitions. 
However, since our objective is more concerned with a fully cooperative system, in this paper, we focus on relatedness structures that help to induce a potential game where decentralized fictitious play can converge to a system-level optimum value according to a social welfare function.

\subsection{Value-Based Relatedness in Risk-Aware Teams}

We propose to encode inter-agent relatedness through the notion of agent value introduced in Section~\ref{sec:problem_statement}. Recall that each agent $i$ is assigned a scalar $\lambda_i > 0$, which captures its relative value, vulnerability, or risk sensitivity within the mission.
Using these values, we define the relatedness between agents $i$ and $j$ as
\begin{equation}\label{eq:value_based_relatedness}
    \gamma_{ij} = \frac{\lambda_j}{\lambda_i}.
\end{equation}
This definition induces asymmetric altruistic preferences, where agents with lower value place greater weight on the utilities of higher-value agents, while higher-value agents are comparatively less influenced by the outcomes of lower-value agents. Intuitively, this structure reflects the principle that preserving or assisting critical agents is more beneficial to the system than prioritizing less critical ones.
As we show next, this utility shaping and relatedness definitions induce a favorable game-theoretic structure.

\subsection{Induced Potential Game Structure} \label{sec:potential_game_structure}

Consider the value-weighted social welfare function
\begin{equation}
\Phi(x(\cdot)) = \sum_{i \in [n]} \lambda_i \, u_i(x(\cdot)).
\label{eq:potential_function}
\end{equation}
We now show that the altruistically shaped game defined by the utilities $\{v_i\}_{i=1}^n$ admits $\Phi$ as a weighted potential function.

\begin{proposition} \label{prop:potential_game}
The game induced by the altruistic utilities \eqref{eq:altruistic_utility} and value-based relatedness \eqref{eq:value_based_relatedness} is a weighted potential game with potential function \eqref{eq:potential_function}.
\end{proposition}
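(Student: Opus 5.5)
The approach is to verify the defining identity of a weighted potential game directly: for every agent $i$, every unilateral deviation $x_i \to x_i'$, and every profile $x_{-i}$ of the other agents,
\begin{equation*}
v_i(x_i', x_{-i}) - v_i(x_i, x_{-i}) = w_i \big( \Phi(x_i', x_{-i}) - \Phi(x_i, x_{-i}) \big)
\end{equation*}
with a strictly positive weight $w_i$ that does not depend on the profile. The natural candidate is $w_i = 1/\lambda_i$, which is positive because $\lambda_i > 0$ for all $i$.

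The key step is an algebraic rewriting of the altruistic utility. Substituting the value-based relatedness \eqref{eq:value_based_relatedness} into \eqref{eq:altruistic_utility} gives
\begin{equation*}
v_i(x) = u_i(x) + \sum_{j \neq i} \frac{\lambda_j}{\lambda_i} u_j(x) = \frac{1}{\lambda_i} \sum_{j \in [n]} \lambda_j u_j(x) = \frac{1}{\lambda_i} \Phi(x).
\end{equation*}
Here the $j=i$ term is absorbed using $\gamma_{ii} = 1 = \lambda_i/\lambda_i$, so each altruistic utility is a positive constant multiple of the potential \eqref{eq:potential_function}. Taking the difference of $v_i$ at $(x_i', x_{-i})$ and $(x_i, x_{-i})$ then yields the weighted potential identity with $w_i = 1/\lambda_i$.

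There is essentially no obstacle; the only care needed is bookkeeping. The identity holds for arbitrary feasible trajectories in $\mathcal{X}_i$, regardless of the specific forms of $\mathcal I_i$, $\mathcal R_i$, and $\mathcal L_i$, since no structure of $u_i$ beyond its being real-valued is used. The diagonal convention $\gamma_{ii}=1$ must be consistent with \eqref{eq:value_based_relatedness}, which it is.

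Two consequences are worth recording for later use. First, because $v_i \propto \Phi$, the game is in fact an identical-interest game up to positive scaling. Second, any global maximizer of $\Phi$ is an SNE in the sense of Definition~\ref{def:sne}. Moreover, since $\Phi$ is a strictly positive weighted sum of the $u_i$, any maximizer of $\Phi$ is Pareto-optimal: a Pareto improvement would strictly increase $\Phi$.
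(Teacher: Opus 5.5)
Your proof is correct and is essentially the paper's argument: the paper substitutes $\gamma_{ij}=\lambda_j/\lambda_i$ into the unilateral-deviation difference and multiplies by $\lambda_i$ to obtain $\lambda_i \Delta v_i = \Delta\Phi$, which is your identity $v_i = \Phi/\lambda_i$ taken in differences. Stating the identity for the utilities themselves, as you do, also makes explicit that the shaped game is identical-interest up to the positive scaling $1/\lambda_i$, a fact the paper uses only implicitly.
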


\begin{proof}
Consider a unilateral deviation by agent $i$ from $x_i$ to $x_i'$, holding $x_{-i}$ fixed. The change in the social utility of agent $i$ is
\[
\Delta v_i
=
\Delta u_i
+
\sum_{j \neq i} \frac{\lambda_j}{\lambda_i} \, \Delta u_j,
\]
where $\Delta v_i =  v_i(x(\cdot))-v_i(x_i'(\cdot),x_{-i}(\cdot))$ captures the unilateral deviation of agent~$i$'s trajectory parameterization, and $x_{-i}(\cdot)$ are the trajectories of all agents $j\neq i$, with a similar structure for $\Delta u_i$.
Multiplying both sides by $\lambda_i$ yields
\[
\lambda_i \Delta v_i
=
\sum_{j\in[n]} \lambda_j \, \Delta u_j
=
\Delta \Phi,
\]
which establishes that any unilateral improvement in $v_i$ corresponds to a weighted improvement in the global potential~$\Phi$.
\end{proof}

\edit{As a consequence, every local maximizer of the weighted social welfare function $\Phi$ is an equilibrium of the altruistically shaped game. The converse does not hold in general, where, without additional joint concavity, an SNE is only guaranteed to be a coordinatewise maximizer of $\Phi$ and need not be a local or global maximizer.}
To narrow our focus for analysis, we restrict our attention to only consider the following family of trajectories for agents that can be parameterized by a finite-dimensional vector as follows
\begin{assumption}\label{assume:parameterized_traj}
    Let $\mathcal{X}_i = \{ x_i(\cdot\,;\theta_i): \theta_i\in\Theta_i \} \subset [T] \times \Omega$, where $\Theta_i \subset \mathbb{R}^{m}$.
\end{assumption}
\noindent
Under this assumption, $x_i(\cdot\,;\theta_i)$ denotes a finite-dimensional parametric representation of a function $x_i:[T] \mapsto \Omega$ (such as waypoints, splines, radial basis functions, neural networks, etc.).

Since $\Phi$ is a monotone aggregation of individual utilities, its maximizers correspond to Pareto-optimal solutions of the original exploration problem under the following assumption.
\begin{assumption}\label{assume:strictly_concave}
    \edit{Let $\Theta := \prod_{i\in[n]}\Theta_i$ be convex, and define $\Phi(\theta):=\Phi(x(\cdot\,;\theta))$. The weighted potential $\Phi:\Theta\to\mathbb{R}$ is continuously differentiable and strictly concave with respect to the full joint parameter $\theta=(\theta_1,\ldots,\theta_n)\in\Theta$.}
\end{assumption}
\edit{
A sufficient utility-level condition is that every $u_i(x(\cdot\,;\theta))$ is continuously differentiable and jointly concave in $\theta$, with their positively weighted sum strictly concave. This restriction will be discussed further in the following section for the simplified exploration model.}
\begin{theorem} \label{thm:SNE}
    \edit{Under Assumptions~\ref{assume:parameterized_traj} and \ref{assume:strictly_concave}, a joint trajectory profile $x(\cdot\,;\theta^\star)$ is a pure SNE if and only if $\theta^\star$ is the unique global maximizer of the weighted potential functional~\eqref{eq:potential_function}. Moreover, $x(\cdot\,;\theta^\star)$ is Pareto-optimal with respect to the agents' original utilities.}
\end{theorem}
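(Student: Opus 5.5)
The proof has three parts: the potential identity, the SNE-iff-maximizer equivalence, and Pareto optimality.

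\textbf{Step 1: transfer SNE conditions to $\Phi$.} By Assumption~\ref{assume:parameterized_traj}, we identify trajectories with parameters $\theta\in\Theta$ and work with $\Phi(\theta)$. Proposition~\ref{prop:potential_game} gives, for any unilateral deviation $\theta_i\to\theta_i'$,
\[
\lambda_i\big(v_i(\theta_i,\theta_{-i})-v_i(\theta_i',\theta_{-i})\big)=\Phi(\theta_i,\theta_{-i})-\Phi(\theta_i',\theta_{-i}).
\]
Since $\lambda_i>0$, $\theta^\star$ is an SNE in the sense of Definition~\ref{def:sne} if and only if, for each $i$, $\theta_i^\star$ maximizes $\Phi(\cdot,\theta_{-i}^\star)$ over $\Theta_i$. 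In other words, SNE coincide exactly with coordinatewise (block) maximizers of $\Phi$.

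\textbf{Step 2: equivalence with the global maximizer.} The direction ``global maximizer $\Rightarrow$ SNE'' is immediate, since a global maximizer is in particular a block maximizer. Uniqueness of the maximizer follows from strict concavity on the convex set $\Theta$, provided a maximizer exists. For the converse I use first-order conditions. Since $\Theta=\prod_i\Theta_i$ is convex and $\Phi$ is $C^1$ and concave in $\theta_i$, block optimality of $\theta^\star$ gives
\[
\nabla_{\theta_i}\Phi(\theta^\star)^\top(\theta_i-\theta_i^\star)\le 0\quad\text{for all }\theta_i\in\Theta_i.
\]
Summing over $i$ gives the joint variational inequality $\nabla\Phi(\theta^\star)^\top(\theta-\theta^\star)\le 0$ for all $\theta\in\Theta$. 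This summation is legitimate precisely because $\Theta$ is a Cartesian product, so $\theta-\theta^\star$ decomposes blockwise. For a concave differentiable function on a convex set, this variational inequality is sufficient for global optimality, because $\Phi(\theta)\le\Phi(\theta^\star)+\nabla\Phi(\theta^\star)^\top(\theta-\theta^\star)\le\Phi(\theta^\star)$. Strict concavity then makes $\theta^\star$ the unique global maximizer.

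I expect this block-to-joint step to be the main obstacle. It is exactly where joint concavity, differentiability, and the product structure of $\Theta$ are all needed; without them a coordinatewise maximizer need not be global, as the paper's remark after Proposition~\ref{prop:potential_game} notes. I would also comment on existence: if $\Theta$ is compact, a maximizer exists by continuity. Otherwise the statement is read conditionally, where ``the unique global maximizer'' presupposes that one exists.

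\textbf{Step 3: Pareto optimality.} Suppose $\theta^\star$ maximizes $\Phi$ but some $\theta$ Pareto-dominates it, meaning $u_i(\theta)\ge u_i(\theta^\star)$ for all $i$, with strict inequality for some $j$. Then
\[
\Phi(\theta)-\Phi(\theta^\star)=\sum_i\lambda_i\big(u_i(\theta)-u_i(\theta^\star)\big)\ge\lambda_j\big(u_j(\theta)-u_j(\theta^\star)\big)>0,
\]
using $\lambda_i>0$ for every $i$. This contradicts maximality, so $x(\cdot\,;\theta^\star)$ is Pareto-optimal among parameterized trajectories. I would note that this Pareto optimality is relative to the feasible set $\mathcal X_i$ restricted by Assumption~\ref{assume:parameterized_traj}.
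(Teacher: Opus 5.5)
Your proposal is correct and follows essentially the same route as the paper: Proposition~\ref{prop:potential_game} reduces the SNE condition to blockwise maximization of $\Phi$. The blockwise first-order conditions are then summed over the product set $\Theta$ into a joint variational inequality, which joint concavity upgrades to global optimality (unique by strict concavity), and Pareto optimality follows by contradiction from $\lambda_i>0$. One small refinement: the blockwise first-order necessary condition needs only continuous differentiability of $\Phi$ and convexity of $\Theta$, not concavity in $\theta_i$; concavity enters only at the final sufficiency step.
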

\begin{proof}
    \edit{First, let $\theta^\star$ be a global maximizer of $\Phi$, and write $x^\star=x(\cdot\,;\theta^\star)$. For any agent $i\in[n]$ and any $\theta_i\in\Theta_i$, Proposition~\ref{prop:potential_game} gives}
    {\small
    \begin{align*}
    \edit{0 \leq \Phi(\theta^\star)-\Phi(\theta_i,\theta_{-i}^\star)
    =\lambda_i\!\left[v_i(x^\star)-v_i\!\left(x_i(\cdot\,;\theta_i),x_{-i}(\cdot\,;\theta_{-i}^\star)\right)\right].}
    \end{align*}
    }
    \edit{Since $\lambda_i>0$, no agent has a beneficial unilateral deviation, and $x^\star$ is a pure SNE.}

    \edit{Conversely, let $x(\cdot\,;\bar\theta)$ be a pure SNE. By Proposition~\ref{prop:potential_game}, $\bar\theta_i$ maximizes $\Phi(\theta_i,\bar\theta_{-i})$ over $\Theta_i$ for every $i$. Continuous differentiability and convexity of $\Theta_i$ therefore imply the blockwise first-order conditions}
    \begin{equation*}
        \edit{\nabla_{\theta_i}\Phi(\bar\theta)^\top(\theta_i-\bar\theta_i)\leq 0,
        \qquad \forall\theta_i\in\Theta_i,\ \forall i\in[n].}
    \end{equation*}
    \edit{Summing these inequalities over all agents yields}
    \begin{equation*}
        \edit{\nabla_{\theta}\Phi(\bar\theta)^\top(\theta-\bar\theta)\leq 0,
        \qquad \forall\theta\in\Theta.}
    \end{equation*}
    \edit{Joint concavity of $\Phi$ then gives}
    \begin{equation*}
        \edit{\Phi(\theta)\leq \Phi(\bar\theta)
        +\nabla_{\theta}\Phi(\bar\theta)^\top(\theta-\bar\theta)
        \leq \Phi(\bar\theta),
        \qquad \forall\theta\in\Theta.}
    \end{equation*}
    \edit{Thus, $\bar\theta$ is a global maximizer of $\Phi$, and strict concavity makes it unique. Finally, if there were a feasible $\tilde\theta$ such that $u_i(x(\cdot\,;\tilde\theta))\geq u_i(x(\cdot\,;\bar\theta))$ for every $i$, with strict inequality for at least one agent, then positivity of the weights $\lambda_i$ would imply $\Phi(\tilde\theta)>\Phi(\bar\theta)$, contradicting global optimality. Hence, $x(\cdot\,;\bar\theta)$ is Pareto-optimal.}
\end{proof}

\edit{Proposition~\ref{prop:potential_game} establishes the weighted-potential structure, while Theorem~\ref{thm:SNE} shows that, under joint strict concavity, the SNE coincides with the unique global maximizer of the weighted social welfare \eqref{eq:potential_function}.}
We now leverage this structure to characterize both the equilibrium behavior of the system and the convergence properties of decentralized learning dynamics when agents iteratively update their planned trajectories.

We focus on a setting in which agents communicate their planned trajectories at each iteration, as described by Algorithm~\ref{alg:altruistic_fp_trajectory}. 
Let $x(\cdot\,; \theta^k) = (x_1(\cdot \, ; \theta_1^k),\dots,x_n(\cdot \, ; \theta_n^k))$ denote the joint planned trajectories at iteration~$k$ with parameterization $\theta^k$. At each iteration, agent $i$ updates $\theta_i$ according to
\begin{equation}
\theta_i^{k+1}
\in
\argmax_{\theta_i \in \Theta_i} \,
v_i\big(x_i(\cdot\,; \theta_i), x_{-i}(\cdot\,; \theta_{-i}^k)\big),
\label{eq:trajectory_best_response}
\end{equation}
where $v_i$ is the altruistic utility defined in \eqref{eq:altruistic_utility} and $\theta_{-i}^k$ is the current parameterization of all agents $j\neq i$ at iteration $k$. We next show that this learning dynamic converges to a Pareto-optimal solution of the original exploration problem under the following additional assumptions.


\begin{algorithm}[t]
\caption{Altruistic Fictitious Play}
\label{alg:altruistic_fp_trajectory}
\begin{algorithmic}[1]
\REQUIRE Initial feasible trajectories $\theta^0_i \in \Theta, \forall i \in [n]$
\FOR{$k = 0,1,2,\dots$}
    \FOR{each agent $i = 1,\dots,N$ \textbf{(sequentially)}}
        \STATE Collect current planned trajectories
        \[
        \theta_{-i}^k = \{ \theta_j \}_{j \neq i}
        \]

        \STATE Compute altruistic best-response trajectory:
        \[
        \theta_i^{k+1}
        =
        \argmax_{\theta_i \in \Theta_i}
        v_i\!\left(
        x_i(\cdot\, ; \theta_i),
        x_{-i}(\cdot\, ; \theta_{-i}^k)
        \right)
        \]

        \STATE Update and broadcast planned trajectory $\theta_i^{k+1}$
    \ENDFOR
\ENDFOR
\end{algorithmic}
\end{algorithm}

\begin{assumption} \label{assume:ficticious_play}
    Let the following properties hold for each agent~$i \in [n]$:
    \begin{enumerate}
        \item \textbf{Feasible Trajectory Sets:}
        \edit{For each agent $i$, the parameter set $\Theta_i$ is nonempty, compact, and convex, and the parameterization $\theta_i\mapsto x_i(\cdot\,;\theta_i)$ is continuous. Consequently, the feasible trajectory space $\mathcal X_i$ is compact.}
    
        \item \textbf{Continuity:}
        Each individual utility function $u_i(x)$ is continuous in the joint trajectory profile $x \in \mathcal X$.
    
    
        \item \textbf{Exact Best Responses:}
        At each iteration, agents compute an exact best response as defined in \eqref{eq:trajectory_best_response}.
    
        \item \textbf{Sequential Updates:}
        All agents update their trajectories sequentially at each iteration.
    \end{enumerate}
\end{assumption}

\begin{theorem}
\label{thm:convergence}
\edit{Let Assumptions~\ref{assume:parameterized_traj}-\ref{assume:ficticious_play} hold. Then every limit point of the sequence $\{x(\cdot \, ; \theta^k)\}$ generated by the trajectory-space best-response update \eqref{eq:trajectory_best_response} through Algorithm~\ref{alg:altruistic_fp_trajectory} is an SNE and corresponds to the unique global maximizer of the weighted social welfare \eqref{eq:potential_function}. Consequently, every limit point is Pareto-optimal with respect to the original utility vector.}
\end{theorem}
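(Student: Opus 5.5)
The plan is to treat Algorithm~\ref{alg:altruistic_fp_trajectory} as block-coordinate ascent on the weighted potential $\Phi(\theta)$. Then I will use compactness together with Theorem~\ref{thm:SNE} to identify the limit points.

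\textbf{Step 1 (monotonicity).} By Proposition~\ref{prop:potential_game}, a unilateral change of $\theta_i$ changes $\Phi$ by exactly $\lambda_i$ times the change in $v_i$. Since $\lambda_i>0$, maximizing $v_i(\cdot,\theta_{-i})$ over $\Theta_i$ is the same as maximizing $\Phi(\cdot,\theta_{-i})$ over $\Theta_i$. Under the sequential updates of Assumption~\ref{assume:ficticious_play}, each agent best-responds to the most recently broadcast parameters. Let $\theta^{k,i}$ denote the profile after agent $i$ updates in sweep $k$. Each exact best response gives $\Phi(\theta^{k,i})\ge\Phi(\theta^{k,i-1})$. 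Hence $\Phi(\theta^k)$ is nondecreasing.

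$\Phi$ is continuous on the compact set $\Theta$, because the $u_i$ are continuous and the parameterization is continuous. So $\Phi$ is bounded above, and $\Phi(\theta^k)\uparrow\Phi^\infty$. Existence of best responses also follows from compactness and continuity.

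\textbf{Step 2 (limit points are blockwise optimal).} Take a subsequence $\theta^{k_\ell}\to\bar\theta$, which exists by compactness. I want to show that for each $i$ and each $\theta_i\in\Theta_i$,
\[
\Phi(\theta_i,\bar\theta_{-i})\le\Phi(\bar\theta).
\]

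\emph{This is the main obstacle.} Coordinate ascent in general need not have blockwise-optimal limit points, because the intermediate profiles $\theta^{k,i}$ may not converge to the same limit. Strict concavity fixes this, and it gives uniqueness of each block best response. The argument is the standard one (Bertsekas/Grippo--Sciandrone style).

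First, pass to further subsequences so that every intermediate profile converges, $\theta^{k_\ell,i}\to\bar\theta^{(i)}$. All of these limits satisfy $\Phi(\bar\theta^{(i)})=\Phi^\infty$.

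Second, suppose $\bar\theta^{(i)}_i\neq\bar\theta^{(i-1)}_i$. These two profiles differ only in block $i$. Their midpoint would then have $\Phi$ strictly larger than $\Phi^\infty$, by strict concavity. But the block-$i$ best response from $\theta^{k_\ell,i-1}$ achieves at least the value at the corresponding midpoints. Passing to the limit gives $\Phi^\infty\ge$ the midpoint value $>\Phi^\infty$, a contradiction. Hence all intermediate limits coincide with $\bar\theta$.

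Third, for any $\theta_i$, optimality of agent $i$'s update gives
\[
\Phi(\theta^{k_\ell,i})\ge\Phi(\theta_i,\theta^{k_\ell,i-1}_{-i}).
\]
Taking limits by continuity yields $\Phi(\bar\theta)\ge\Phi(\theta_i,\bar\theta_{-i})$.

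\textbf{Step 3 (conclusion).} By Proposition~\ref{prop:potential_game}, blockwise optimality of $\Phi$ at $\bar\theta$ is exactly the SNE condition~\eqref{eq:SNE}, so $x(\cdot\,;\bar\theta)$ is a pure SNE. Theorem~\ref{thm:SNE} then implies that $\bar\theta$ is the unique global maximizer $\theta^\star$ of $\Phi$, and that $x(\cdot\,;\theta^\star)$ is Pareto-optimal.

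Convergence of the trajectories follows from continuity of $\theta\mapsto x(\cdot\,;\theta)$. Every trajectory limit point arises from a parameter limit point along a subsequence, by compactness. Since the maximizer is unique, the whole sequence in fact converges to $x(\cdot\,;\theta^\star)$.
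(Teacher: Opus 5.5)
Your proposal is correct and has the same skeleton as the paper's proof: monotone block ascent on $\Phi$, then blockwise optimality of limit points, then Theorem~\ref{thm:SNE}. Where it genuinely differs is the crux, namely showing that the intermediate profiles $\theta^{k_\ell,i}$ converge to the same $\bar\theta$.

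\textbf{The paper's route.} It invokes the maximum theorem to make each block's unique best-response map continuous. It then argues that a profitable deviation for agent~$1$ at $\bar\theta$ would give a uniformly positive increase of $\Phi$ along the subsequence, contradicting the vanishing blockwise increments. Continuity of the best-response map then carries $\theta^{k_\ell,1}\to\bar\theta$ forward to agents $2,\dots,n$.

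\textbf{Your route.} You extract further subsequences so that all $n$ intermediate profiles converge, and note that their limits share the value $\Phi^\infty$. Strict concavity on the segment between consecutive limits, which lies in $\Theta$ by convexity of $\Theta_i$, forces those limits to coincide. Blockwise optimality then follows by passing to the limit in the best-response inequality.

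\textbf{What each buys.} Your argument is more elementary and self-contained: it needs no Berge-type continuity of the best-response map, only values of $\Phi$ and convexity. It also uses only blockwise strict concavity, so it covers Corollary~\ref{cor:local_SNE} as well. The paper's argument is shorter once the maximum theorem is granted, and it avoids the extra subsequence extraction.

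\textbf{Points you make explicit.} Three things the paper leaves implicit:
\begin{itemize}
\item the Gauss--Seidel reading of Algorithm~\ref{alg:altruistic_fp_trajectory}, which the monotonicity step needs and which the displayed update with $\theta^k_{-i}$ does not literally state;
\item that limit points of $\{x(\cdot\,;\theta^k)\}$ come from limit points of $\{\theta^k\}$, by compactness of $\Theta$ and continuity of the parameterization;
\item that uniqueness of the maximizer upgrades subsequential convergence to convergence of the whole sequence, which is slightly stronger than the theorem states.
\end{itemize}
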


\begin{proof}
\edit{By Proposition~\ref{prop:potential_game}, maximizing $v_i$ with respect to $\theta_i$ while holding $\theta_{-i}$ fixed is equivalent to maximizing $\Phi$ over the same block. Assumption~\ref{assume:strictly_concave} implies strict concavity in every block, so Assumptions~\ref{assume:ficticious_play}.1 and \ref{assume:ficticious_play}.2 ensure that each best response exists and is unique.}

\edit{Let $\theta^{k,0}=\theta^k$, let $\theta^{k,i}$ denote the joint parameter immediately after agent $i$ updates during sweep $k$, and let $\theta^{k,n}=\theta^{k+1}$. By Assumption~\ref{assume:ficticious_play}.3, we have each exact best response satisfies}
\begin{equation*}
\edit{\Phi(\theta^{k,i})-\Phi(\theta^{k,i-1})
=\lambda_i\!\left[v_i(\theta^{k,i})-v_i(\theta^{k,i-1})\right]\geq 0.}
\end{equation*}
\edit{Thus, under the sequential updates in Assumption~\ref{assume:ficticious_play}.4, $\{\Phi(\theta^k)\}$ is nondecreasing. It is bounded above because $\Theta=\prod_i\Theta_i$ is compact and $\Phi$ is continuous, and hence it converges to some finite $\Phi^\star$. Since the increase over a complete sweep is the sum of finitely many nonnegative blockwise increases, every blockwise increase converges to zero.}

\edit{Let $\bar\theta$ be an accumulation point of $\{\theta^k\}$ and choose a subsequence $\theta^{k_\ell}\to\bar\theta$. The unique best-response map for each block is continuous by the maximum theorem. If agent~$1$ had a profitable deviation at $\bar\theta$, continuity would give a uniformly positive increase in $\Phi$ at the first update along this subsequence, contradicting the vanishing blockwise increases. Hence, $\bar\theta_1$ is agent~$1$'s unique best response, and $\theta^{k_\ell,1}\to\bar\theta$. Applying the same argument successively to agents $2,\ldots,n$ shows that every $\bar\theta_i$ is a best response to $\bar\theta_{-i}$. Therefore, $x(\cdot\,;\bar\theta)$ is an SNE. Theorem~\ref{thm:SNE} then implies that $\bar\theta$ is the unique global maximizer of $\Phi$ and that the associated trajectory profile is Pareto-optimal.}
\end{proof}

\edit{Joint strict concavity in Assumption~\ref{assume:strictly_concave} is what promotes an SNE from a coordinatewise optimum to the unique global maximizer of \eqref{eq:potential_function}. If this joint condition is replaced by strict concavity only in each agent's own parameter block, Algorithm~\ref{alg:altruistic_fp_trajectory} still admits the following weaker characterization; however, no local, global, or Pareto-optimality conclusion follows in general.}

\begin{corollary} \label{cor:local_SNE}
\edit{Let Assumptions~\ref{assume:parameterized_traj} and \ref{assume:ficticious_play} hold, and suppose that $\Phi$ is continuously differentiable and, for every $i$ and fixed $\theta_{-i}$, strictly concave in $\theta_i$. Then every accumulation point of the sequence generated by Algorithm~\ref{alg:altruistic_fp_trajectory} is an SNE and a coordinatewise maximizer of $\Phi$. 
}
\end{corollary}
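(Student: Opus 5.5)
The argument is the first half of the proof of Theorem~\ref{thm:convergence}, with the appeal to Theorem~\ref{thm:SNE} removed. By Proposition~\ref{prop:potential_game}, for fixed $\theta_{-i}$ we have
\[
\Phi(\theta_i,\theta_{-i})-\Phi(\theta_i',\theta_{-i})=\lambda_i\bigl[v_i(\theta_i,\theta_{-i})-v_i(\theta_i',\theta_{-i})\bigr],
\]
with $\lambda_i>0$. Hence the best-response update \eqref{eq:trajectory_best_response} is exactly block maximization of $\Phi$ over $\Theta_i$.

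Existence of each best response follows from compactness of $\Theta_i$ (Assumption~\ref{assume:ficticious_play}.1) and continuity of $\Phi$. Continuity of $\Phi$ comes from Assumption~\ref{assume:ficticious_play}.2 together with continuity of the parameterization. Uniqueness of the best response follows from the assumed strict concavity of $\Phi$ in $\theta_i$ on the convex set $\Theta_i$.

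We track the sweep as in the previous proof, writing $\theta^{k,0}=\theta^k$, with $\theta^{k,i}$ the joint parameter after agent $i$ updates and $\theta^{k,n}=\theta^{k+1}$. Each exact best response gives $\Phi(\theta^{k,i})\ge\Phi(\theta^{k,i-1})$. Therefore $\Phi(\theta^k)$ is nondecreasing. It is also bounded above, since $\Phi$ is continuous on the compact set $\Theta$, so it converges. Each sweep increase is a sum of $n$ nonnegative block increases, so every block increase $\Phi(\theta^{k,i})-\Phi(\theta^{k,i-1})$ tends to zero.

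Next, take an accumulation point $\bar\theta$ with $\theta^{k_\ell}\to\bar\theta$. Let $\beta_i(\theta_{-i})$ denote the unique maximizer of $\Phi(\cdot,\theta_{-i})$ over $\Theta_i$. By Berge's maximum theorem, $\beta_i$ is continuous: the constraint set $\Theta_i$ is constant and compact, $\Phi$ is jointly continuous, and the argmax is single-valued. This step needs the most care, and I would handle it by induction over agents.

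Start with agent $1$. Along the subsequence, $\theta^{k_\ell,1}=(\beta_1(\theta^{k_\ell}_{-1}),\theta^{k_\ell}_{-1})\to(\beta_1(\bar\theta_{-1}),\bar\theta_{-1})=:\hat\theta$. The block increase tends to zero, and $\Phi$ is continuous, so $\Phi(\hat\theta)=\Phi(\bar\theta)$. Since $\hat\theta_1=\beta_1(\bar\theta_{-1})$ is the unique maximizer, this forces $\bar\theta_1=\beta_1(\bar\theta_{-1})$, and hence $\theta^{k_\ell,1}\to\bar\theta$.

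Now repeat for agent $2$, using the intermediate points $\theta^{k_\ell,1}\to\bar\theta$ in place of $\theta^{k_\ell}$, and continue through agent $n$. At each stage the same uniqueness argument gives $\bar\theta_i=\beta_i(\bar\theta_{-i})$ and shows the next intermediate sequence still converges to $\bar\theta$. Uniqueness of the block maximizer is what keeps the intermediate iterates from drifting away from $\bar\theta$.

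We conclude that $\bar\theta_i$ maximizes $\Phi(\cdot,\bar\theta_{-i})$ for every $i$, so $\bar\theta$ is a coordinatewise maximizer of $\Phi$. Using the potential identity once more, $\bar\theta_i$ also maximizes $v_i(\cdot,\bar\theta_{-i})$, so \eqref{eq:SNE} holds and $x(\cdot\,;\bar\theta)$ is an SNE.

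Joint concavity is never used, so no global-optimality claim is made. Continuous differentiability is not needed for this argument beyond what continuity already provides.
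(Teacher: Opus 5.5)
Your proposal is correct and follows the paper's proof. The paper reruns the monotone-potential and accumulation-point argument from Theorem~\ref{thm:convergence}, with unique best responses now supplied by blockwise strict concavity, and uses Proposition~\ref{prop:potential_game} to identify best responses in $v_i$ with block maximizers of $\Phi$; you simply write out the agent-by-agent limit step (via continuity of $\beta_i$ and uniqueness) in more detail than the paper does.
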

\begin{proof}
\edit{Blockwise strict concavity guarantees a unique exact best response for every agent. The monotone-potential and accumulation-point argument in the proof of Theorem~\ref{thm:convergence}, up to the conclusion that every agent is playing a best response, applies without joint concavity. Hence, every accumulation point is an SNE. Proposition~\ref{prop:potential_game} then implies that it is a coordinatewise maximizer of $\Phi$. 
}
\end{proof}

\edit{The above results illustrate how altruism is not solely a behavioral preference, but also a mechanism for aligning decentralized decision making with system-level objectives. By weighting the utilities of teammates according to agent value, each robot accounts for how its trajectory affects the rest of the team, re-framing the exploration problem into a potential game. Under Assumption~\ref{assume:strictly_concave}, its SNE is the unique Pareto-optimal maximizer of the weighted social welfare; without joint concavity, the equilibrium guarantee is limited to coordinatewise optimality.}
With these results, we can now illustrate the application of this risk-aware exploration framework using a simplified setting in which agent trajectories are modeled as sequences of waypoints.

\section{Risk-Aware Map Exploration Model}
While the analysis in Section~\ref{sec:altruism_explore} is general to any agent utilities and trajectory parameterizations that satisfy a combination of Assumptions~\ref{assume:parameterized_traj}-\ref{assume:ficticious_play}, we instantiate in this section the coupled exploration game introduced in Section~\ref{sec:problem_statement} using a utility model that is discretized over the continuous exploration domain. This discretization enables a more computationally tractable setting for individual agents, while also simplifying the interpretation of the exploration results. 

Additionally, since we are interested in examining agent trade-offs between the risk and reward of exploring unknown regions, and how the altruistic utility mixing model might encourage some agents to take greater risks for the benefit of the team, we define information and risk models that explicitly incorporate that risk-reward trade-off as agents traverse unknown regions with potentially unknown hazardous conditions.

\subsection{Coupling Information and Risk in Exploration} \label{sec:coupled_risk_info_model}
A challenging aspect of the exploration problem is that, by the nature of exploration, agents must enter and traverse regions with unknown or uncertain risks. In this setting, there can be a natural trade-off between the amount of information gained and the amount of risk incurred by an agent to obtain that information, where agents may learn the most by exploring completely unknown regions while simultaneously experiencing higher risk due to uncertainties about the unknown environment. 

Thus, to explore this risk-reward trade-off in a setting where some agents may intentionally take on additional risk to reduce uncertainty for other agents, we consider the following simplified risk model for agents
\begin{equation}
    p_i(x(t)) = \mu(x_i(t)) + \beta_i \xi(x_i(t), x_{-i}(\cdot)),
\end{equation}
where $\mu: \Omega \rightarrow \mathbb{R}_{\geq 0}$ models the average (or known) risk and $\xi$ models uncertain (or unknown) risk given trajectories of all agents, with $\beta_i \geq 0$ denoting agent~$i$'s tolerance to unknown risk. We consider a deterministic risk profile defined by known hazards, where the known risk to agent~$i$ of a given trajectory, parameterized by $\theta_i$, is defined by its proximity to hazards as
\begin{equation}
\mu(x_i(t\,; \theta_i))
=
\sum_{r=1}^{R}
\exp\!\left(-\frac{\|x_i(t\,; \theta_i) - h_r\|^2}{2\sigma_r^2}\right),
\label{eq:known_risk_field}
\end{equation}
where $h_r \in \Omega$ denotes hazard centers.

In this paper, we consider a time-varying uncertainty, where, when an agent explores an uncertain region, the uncertainty of that region is reduced as new information is obtained. In this sense, we may consider uncertainty reduction itself as an information utility for agents based on area coverage as follows. Consider the coverage field defined by agent trajectory histories, where the coverage density at any given point $q \in \Omega$, given past and current trajectories of agents up to time $T_0 + T$, is 
\begin{equation}
    \rho\big(q, x(\cdot)\big) = \sum_{i \in [n]} \sum_{t=1}^{T_0+T} K(q, x_i(t)), 
\end{equation}
where $T_0$ is the current time, $T$ is the current trajectory length, and $K : \Omega \times \Omega \to \mathbb{R}_{\ge 0}$ is the overlap kernel
\begin{equation}
    K(x,y) = \exp\left( -\frac{\Vert x - y \Vert^2}{2 \sigma_K^2}\right),
\end{equation}
which, for simplicity, is the same overlap kernel we will use for the redundancy term in \eqref{eq:redundancy}. Thus, to model the uncertainty of a given point, we consider an inverse metric of the coverage density
\begin{equation}
    \xi\big(q , x(\cdot)\big) = \exp\left(-\rho\big(q , x(\cdot)\big)\right),
\end{equation}
where uncovered or unobserved points evaluate to $\xi \approx 1$ and well covered points result in $\xi \rightarrow 0$.
Under this uncertainty model, we say that an agent's information utility for the trajectory $x_i(\cdot\,; \theta_i)$ is the marginal decrease in Shannon entropy $H(q) = -\xi(q) \log\xi(q)$ for the entire uncertainty field. To quantify this marginal decrease, we first define the total entropy as
\begin{equation}
    \Xi\big(x(\cdot)\big) = \int_{\Omega} -\xi\big(q , x(\cdot)\big) \log\xi\big(q , x(\cdot)\big) dq.
\end{equation}
Then, the marginal decrease in map uncertainty given a trajectory $x_i(\cdot \,; \theta_i)$ is
\begin{align*}\label{eq:true_marginal_gain}
   \Delta \Xi_i &=  \Xi\left(\big(x_{-i}, x_i^{T_0}\big)\right) - \Xi\left(\big(x_{-i}, x_i^{T_0}, x_i(\cdot\,; \theta_i)\big)\right), \\
   &= \int_{\Omega} \left[-\xi_{-i}(q) \log \xi_{-i}(q) + \xi(q) \log\xi(q\big) \right]dq
\end{align*}
where $x_{-i}$ is the combined trajectories of all agents $j \neq i$, including current trajectories from $T_0$ to $T_0+T$, $x_{i}^{T_0}$ is the trajectory history of agent~$i$ up to $T_0$, and $\xi_{-i}(q)$ is the uncertainty field without the additional trajectory $x_i(\cdot \,; \theta_i)$. Since $\xi_{-i}(q) = e^{-\rho_{-i}(q)}$, we can rewrite the uncertainty field function as $\xi(q) = \xi_{-i}(q)e^{-\rho_i(q)}$,
where
\begin{equation*}
    \rho_i(q) = \sum_{t = 1}^{T_0+T} K\big(q,x_i(\cdot\,; \theta_i)\big). 
\end{equation*}
Using the Taylor expansion $e^{-\rho_i(q)} \approx 1 - \rho_i(q)$, we have $\xi(q) \approx \xi_{-i}(q)(1 - \rho_i(q))$, thus
\begin{equation*}
    \delta \xi = \xi(q) - \xi_{-i}(q) \approx - \xi_{-i}(q)\rho_i(q).
\end{equation*}
We can then expand entropy using first-order variation
\begin{align*}
    \Delta \Xi &\approx \int_{\Omega} \frac{\partial H(q)}{\partial \xi} \delta\xi(q) dq   \\
    &= \int_{\Omega}(1+ \log\xi_{-i}(q))\xi_{-i}(q) \rho_i(q) dq,
\end{align*}
where this approximation now allows us to linearly separate the contribution of each trajectory point $x_i(t, \theta_i)$.

Thus, we compute the marginal uncertainty reduction of a given trajectory point $x_i(t \,; \theta_i)$ as
\begin{align}
    I_i\big(x_{-i}(\cdot), x_i(t \,; \theta_i)\big)
    &= \int_{\Omega} \kappa_{-i}(q) K\big(q,x_i(t \,; \theta_i)\big) dq,
    \label{eq:info_utility_uncertainty}
\end{align}
where
\begin{equation}
   \kappa_{-i}(q,x_{-i}) = (1+ \log\xi_{-i}(q,x_{-i}))\xi_{-i}(q,x_{-i}).
\end{equation}

Therefore, with this model of information utility, we have a coupled notion of risk and reward through the uncertainty field $\xi$, as agents must manage both information incentives by lowering uncertainty countered with potential loss according to the risk penalty. Note also that since we have included the current trajectories of other agents in the evaluation of future risk, we have also coupled the information and risk utilities for all agents. Importantly, since our information, redundancy, and risk models are smooth in $\theta_i$, we can compute the gradient of the utility $u_i$ (and, by extension, the gradient of the social utility $v_i$), which is required for agent-level trajectory updates in Algorithm~\ref{alg:altruistic_fp_trajectory}, where the full expression of the utility for agent~$i$ is given by
\begin{align}
    u_i\big(x(\cdot)\big)
    &= \sum_{t=T_0+1}^{T_0+T} \int_{\Omega} \kappa_{-i}(q,x_{-i}(\cdot)) K\big(q,x_i(t \,; \theta_i)\big) dq \nonumber\\
    & \quad - \sum_{j \in [n]} \sum_{t=T_0+1}^{T_0+T} \sum_{s=T_0+1}^{T_0+T} K\!\big(x_i(t \,; \theta_i)\big), x_j(s)\big) \nonumber \\
    & \quad - \lambda_i \sum_{t=T_0+1}^{T_0+T} \mu(x_i(t \,; \theta_i)) + \beta_i \xi(x_i(t \,; \theta_i)). \label{eq:full_utility}
\end{align}

\subsection{Numerical Best-Response Computation with Waypoints}

To simplify trajectory parameterization and gradient computation, we directly parameterize each agent’s discrete-time trajectory by a sequence of waypoints. Specifically, for agent $i$ we define
\begin{equation}
\theta_i = [w_{i,1}^\top, \dots, w_{i,T}^\top]^\top,
\quad x_i(t; \theta_i) = w_{i,t}, \; t = 1, \dots, T,
\end{equation}
where $w_{i,t} \in \Omega$ denotes the robot’s planned position at time step $t$.

To enforce basic kinematic feasibility, we constrain successive waypoints to satisfy
\begin{equation}
\|w_{i,t+1} - w_{i,t}\| \leq \Delta w_i^{\max},
\qquad \forall t = 1, \dots, T-1,
\label{eq:motion_constraint}
\end{equation}
where $\Delta w_i^{\max}$ is the maximum distance robot $i$ can travel between timesteps. The feasible parameter set $\Theta_i$ is therefore defined by the workspace constraint $w_{i,t} \in \Omega$ and the inter-waypoint distance constraints in~\eqref{eq:motion_constraint}.

At iteration $k$ of Algorithm~1, agent $i$ computes an approximate best response by solving
\begin{equation*}
\max_{\theta_i \in \Theta_i}
v_i\big(x_i(\cdot\,; \theta_i), x_{-i}(\cdot\,; \theta_{-i}^k)\big),
\end{equation*}
using projected gradient ascent. The gradient of $v_i$ with respect to waypoint $w_{i,t}$ is given by
\begin{equation}\label{eq:grad_simple}
    \nabla_{w_{i,t}} v_i = \nabla_{w_{i,t}} u_i +  \sum_{j \neq i} \gamma_{ij} \, \nabla_{w_{i,t}}u_j,
\end{equation}
where
\begin{align}
\nabla_{w_{i,t}} u_i
&=
\nabla_{w_{i,t}} I_i(w_{i,t}, x_{-i})
-
\lambda_i \nabla_{w_{i,t}} p_i(w_{i,t},x_{-i})
\nonumber \\
&\quad
-
\alpha \sum_{j \neq i} \sum_{s=T_0+1}^{T_0+T}
\nabla_{w_{i,t}} K(w_{i,t}, x_j(s)). \label{eq:grad_ui}
\end{align}
and 
\begin{align}
    \nabla_{w_{i,t}} u_j &= \nabla_{w_{i,t}} I_j - \lambda_j\beta_j\sum_{s=T_0+1}^{T_0+T} \nabla_{w_{i,t}} \xi(x_j(s),x_{-j}) \nonumber \\
    & \quad
    - \alpha \sum_{s=T_0+1}^{T_0+T} \nabla_{w_{i,t}} K\!\big(x_j(s), w_{i,t}\big), \label{eq:grad_uj}
\end{align}
where, it should be noted that $x_{-j}$ includes the waypoint $w_{i,t}$.

For the specific fields defined in Section~\ref{sec:coupled_risk_info_model}, the spatial gradients required in~\eqref{eq:grad_ui} are given by
\begin{equation*}
    \nabla_{w_{i,t}} I_i(w_{i,t},x_{-i})
=
\int_{\Omega}  \kappa_{-i}(q,x_{-i}) \nabla_{w_{i,t}} K\big(q,w_{i,t}\big) dq,
\end{equation*}
\begin{align*}
    \nabla_{w_{i,t}} p_i(w_{i,t},x_{-i})
&= \nabla_{w_{i,t}}\mu(w_{i,t}) + \beta_i \nabla_{w_{i,t}}\xi(w_{i,t},x_{-i}) 
\end{align*}
with
\begin{align*}
   \nabla_{w_{i,t}}\mu(w_{i,t}) &= \sum_{r=1}^{R}
\nabla_{w_{i,t}}K(w_{i,t}, h_r)
\end{align*}
and
\begin{align*}
    \nabla_{w_{i,t}}\xi(w_{i,t},x_{-i}) & = \frac{\partial\xi(w_{i,t},x_{-i})}{\partial \rho(w_{i,t},x_{-i})} \nabla_{w_{i,t}}\rho(w_{i,t},x_{-i}) \\ 
    &=  -\xi(w_{i,t},x_{-i})\nabla_{w_{i,t}}\rho(w_{i,t},x_{-i}) 
\end{align*}
where
\begin{equation*}
    \nabla_{w_{i,t}}\rho(w_{i,t},x_{-i}) =  \sum_{j \neq i} \sum_{s=1}^{T_0+T} \nabla_{w_{i,t}} K(w_{i,t}, x_j(s)),
\end{equation*}
and
\begin{equation*}
    \nabla_x K(x,y) = \frac{1}{\sigma_K^2}K(y,x)(y-x).
\end{equation*}
The additional cross-gradient terms in \eqref{eq:grad_uj} are computed as
\begin{align}
    \nabla_{w_{i,t}} I_j &=
    \sum_{s = 1}^{T_0+T}\int_{\Omega} K\big(q,x_j(s)\big) \frac{\partial \kappa_{-j}(q)}{ \partial \rho_{-j}(q)} \nabla_{w_{i,t}}K\big(q, w_{i,t}\big) dq,
\end{align}
where
\begin{equation*}
    \frac{\partial \kappa_{-j}(q)}{ \partial \rho_{-j}(q)} = \frac{\partial \kappa_{-j}}{ \partial \xi_{-j}}\frac{\partial \xi_{-j}}{ \partial \rho_{-j}}= -(2+\log \xi_{-j}(q))\xi_{-j}(q),
\end{equation*}
and
\begin{equation*}
    \nabla_{w_{i,t}} \xi(x_j(s),x_{-j}) = -\xi(x_j(s),x_{-j}) \nabla_{w_{i,t}}K(x_j(s), w_{i,t}).
\end{equation*}

Consequently, the full parameter gradient is given by
\begin{equation*}
    \nabla_{\theta_i} v_i = \big[ \nabla_{w_{i,1}} v_i, \dots, \nabla_{w_{i,T}} v_i \big]^\top,
\end{equation*}
making the gradient accent parameter update
\begin{equation}
\theta_i^{(\tau+1)}
=
\Pi_{\Theta_i}
\left(
\theta_i^{(\tau)}
+
\eta \nabla_{\theta_i} v_i
\right),
\end{equation}
where $\eta>0$ is the accent rate parameter and $\Pi_{\Theta_i}$ denotes projection onto the feasible set defined by $\Omega$ and~\eqref{eq:motion_constraint}. In practice, this projection is implemented by clamping each waypoint to $\Omega$ and projecting successive waypoint differences onto the ball of radius $\Delta w_i^{\max}$.

The exploration model developed in this section provides a concrete instantiation of the altruistic game-theoretic framework introduced in Section~\ref{sec:altruism_explore}. We next evaluate the resulting planner in simulation and hardware experiments to examine how altruistic utility shaping affects information gathering, risk allocation, and collective exploration performance.

\section{Experiments}

We evaluate the proposed altruistic exploration framework in numerical simulation and on the Robotarium platform. The purpose of the simulation study is to isolate the effect of the altruistic utility shaping on coverage, risk allocation, and inter-agent coordination, while the Robotarium experiment verified that the resulting waypoint planner can be executed by physical unicycle robots using standard low-level controllers and safety filters.

In simulation, robots explore the bounded planar domain $\Omega=[-1.5,1.5]\times[-1,1]$, discretized on a $40\times40$ grid. The known hazard field consists of two Gaussian hazards centered at $(0.5,0.5)$ and $(-0.5,-0.3)$ with standard deviations $\sigma_r$ of~$0.3$ and $0.5$, respectively. Coverage is accumulated using a Gaussian sensing kernel with $\sigma_K=0.1$, and the uncertainty at each grid point is computed as $\xi(q)=\exp(-\rho(q))$, where $\rho(q)$ is the accumulated coverage density, where, for all experiments, the four robots are initialized at $(-1,0)$, $(1,0)$, $(0,0.5)$, and $(0,-0.5)$.

Each robot plans a finite-horizon waypoint trajectory with planning horizon $T=4$, with $dt = 1$ and $\Delta w_i^{\max} = 0.2$. The first waypoint is fixed at the robot's current position, and subsequent waypoints are projected to remain inside $\Omega$ and satisfy the maximum inter-waypoint displacement constraint. At each replanning round, projected gradient ascent is used to approximately maximize the altruistic utility $v_i$ according to \eqref{eq:altruistic_utility} and \eqref{eq:value_based_relatedness}, given the agent value weights $\lambda_{i}$ for $i \in [n]$. We compare a selfish baseline, corresponding to $\Gamma=I$, against the altruistic game with relatedness weights $\gamma_{ij}=\lambda_j/\lambda_i$. Under this choice, lower-valued robots place greater relative weight on protecting higher-valued teammates, while higher-valued robots are less inclined to sacrifice themselves for lower-valued agents.

Figure~\ref{fig:sim_traj} shows representative final trajectories for the selfish and altruistic planners, where the self planner sets $\Gamma = I$ and the altruistic planner sets $\Gamma$ according to \eqref{eq:value_based_relatedness}, with $\lambda = [40, 40, 15, 15]$. Both methods reduce uncertainty across the environment, but the spatial organization of the trajectories differs.
In the selfish case, each robot optimizes only its own information-risk tradeoff, leading to more overlapping paths and less explicit differentiation by agent value. In contrast, the altruistic planner produces trajectories that are more separated and more strongly structured by the value parameters. Lower-valued robots take on exploration near higher-risk regions when doing so improves team coverage and reduces the need for higher-valued robots to enter those regions. Thus, altruism does not simply make the team more conservative; rather, it changes which robots absorb risk.

We note that the trajectories in Fig.~\ref{fig:sim_traj} are waypoint plans, so they may appear piecewise linear rather than dynamically smooth. This representation keeps the best-response optimization computationally efficient, while a lower-level controller can convert the waypoint sequence into executable robot motion.

\begin{figure}[htbp]
    \centering
    \begin{subfigure}[b]{0.9\columnwidth}  
        \centering
        \includegraphics[width=\textwidth]{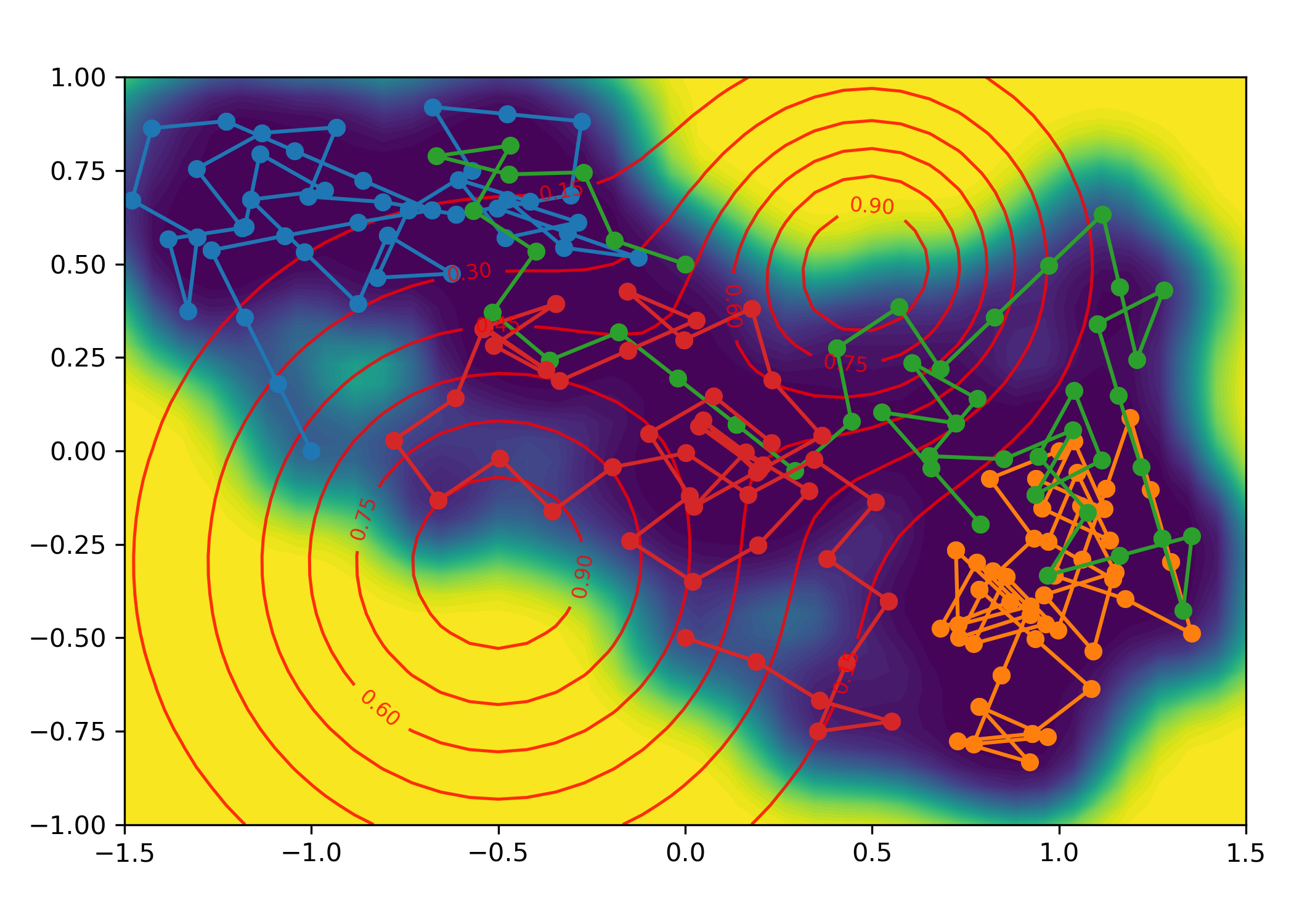}
        \caption{Non-altruistic}
        \label{fig:top}
    \end{subfigure}
    \begin{subfigure}[b]{0.9\columnwidth}
        \centering
        \includegraphics[width=\textwidth]{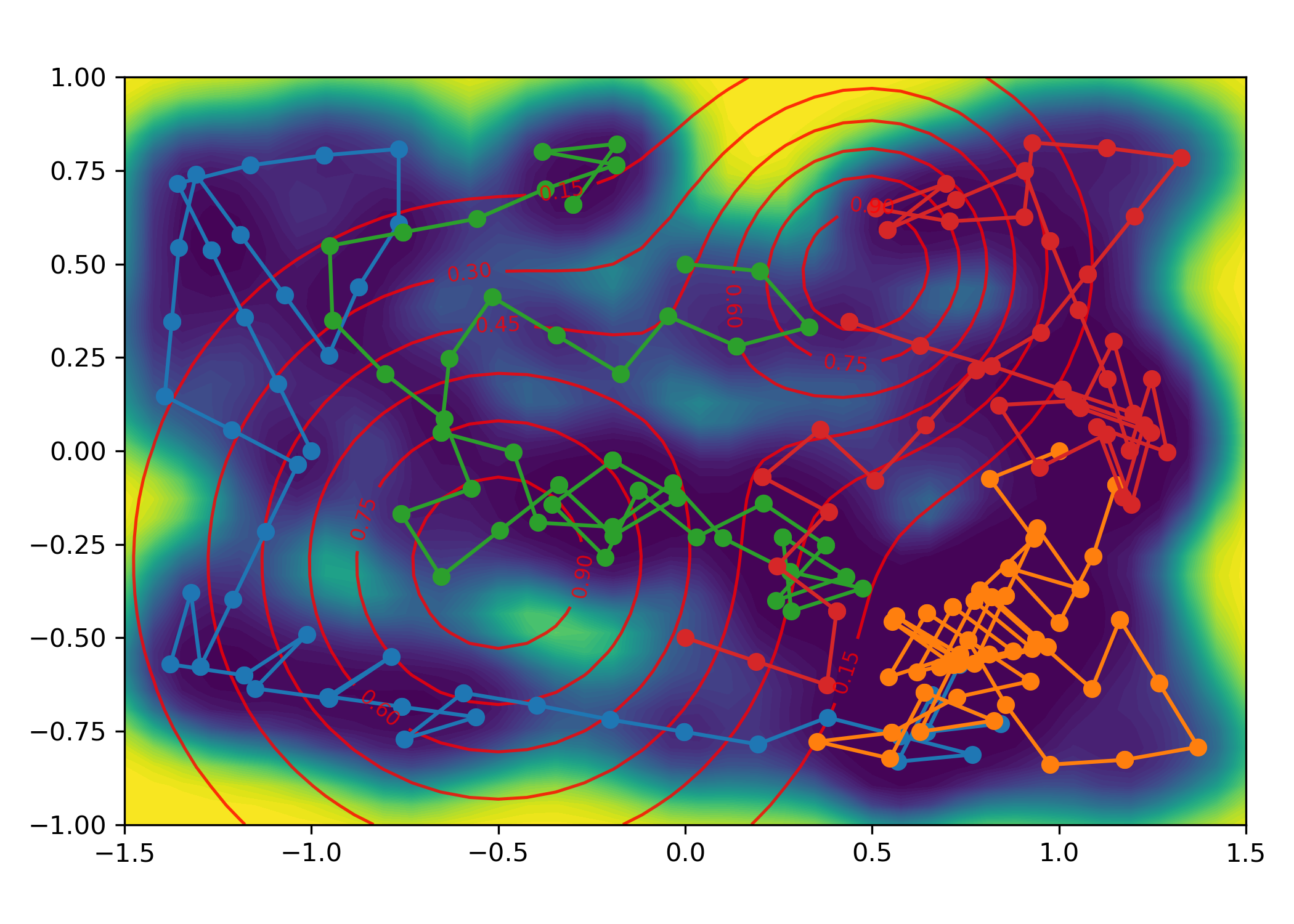}
        \caption{Altruistic}
        \label{fig:bottom}
    \end{subfigure}

    \caption{Representative simulated exploration trajectories for (a) the selfish baseline (i.e., $\Gamma = I$) and (b) the altruistic utility with relationships defined by \eqref{eq:value_based_relatedness}, where $\lambda_{\text{blue}} = \lambda_{\text{orange}} = 40$ and $\lambda_{\text{red}}=\lambda_{\text{green}} = 15$. The filled background shows final map uncertainty, with darker regions corresponding to lower remaining uncertainty. Red contours denote the known hazard field. While both methods explore the domain, the altruistic planner produces more spatially separated trajectories and allocates hazardous exploration according to agent value, allowing lower-valued agents to absorb risk that would otherwise be borne by higher-valued teammates.}
\label{fig:sim_traj}
\end{figure}

To quantify these effects across different importance configurations, we run 50 trials for five configurations of varying risk tolerances and relative agent importance compositions, where randomness enters through the warm-start perturbations used when initializing trajectories before gradient optimization. Figure~\ref{fig:metrics} reports four metrics for selected homogeneous and heterogeneous value profiles: mean uncertainty, raw accumulated risk, value-weighted risk, and minimum pairwise distance. Bars show the sample mean across trials and error bars indicate standard error. The altruistic planner achieves comparable uncertainty reduction to the selfish baseline, indicating that the cooperative risk allocation does not come at the expense of exploration performance. The most consistent improvement appears in minimum pairwise distance, where altruism increases separation between robots and thereby reduces redundant or conflicting exploration. In the higher-value and mixed-value cases, altruism also reduces value-weighted risk, showing that the induced behavior preferentially protects more valuable agents while still allowing the team to gather information.

\begin{figure*}
    \centering
    \includegraphics[width=\linewidth]{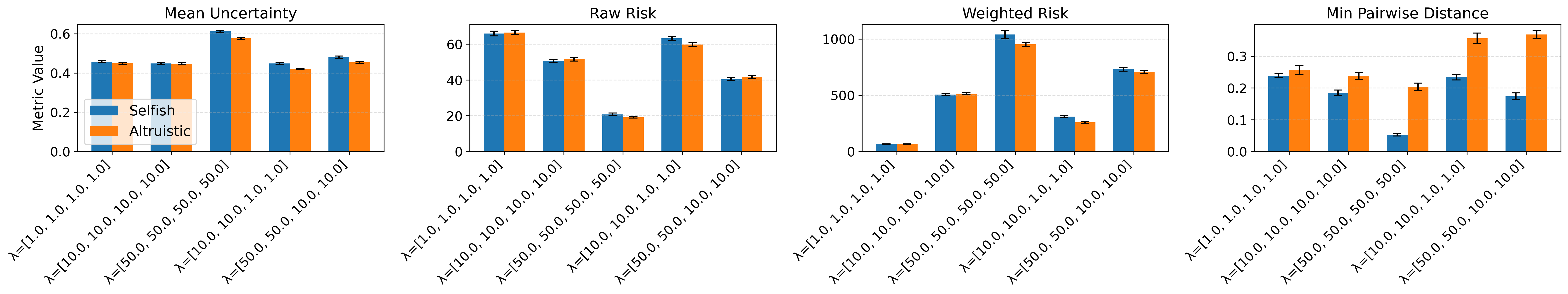}
    \caption{Aggregate simulation performance over 50 trials for selected value configurations. Bars show sample means and error bars show standard error. The altruistic planner maintains comparable uncertainty reduction to the selfish baseline, while generally increasing minimum pairwise distance and reducing redundant exploration. In heterogeneous and high-value configurations, altruism also lowers value-weighted risk, indicating that the relatedness-weighted utility reallocates risk away from more valuable agents.}
\label{fig:metrics}
\end{figure*}

Finally, we implement the planner on hardware in the UCI Robotarium, where the high-level planner generates the same finite-horizon waypoint plans in real-time, while each robot tracks the current waypoint using a single-integrator position controller. Control inputs are then passed through a boundary-aware single-integrator barrier certificate and then mapped to unicycle commands. Figure~\ref{fig:robotarium} shows snapshots from a representative run, where a video of this run may be viewed at \href{https://youtu.be/KigS25qrS4I}{https://youtu.be/KigS25qrS4I}. The robots progressively reduce uncertainty while respecting collision and boundary constraints, demonstrating that the proposed planner can be coupled to standard control primitives without requiring a specialized low-level controller.

\begin{figure*}[t]
    \centering
    
    \begin{subfigure}[b]{0.31\textwidth}
        \centering
        \includegraphics[width=\textwidth]{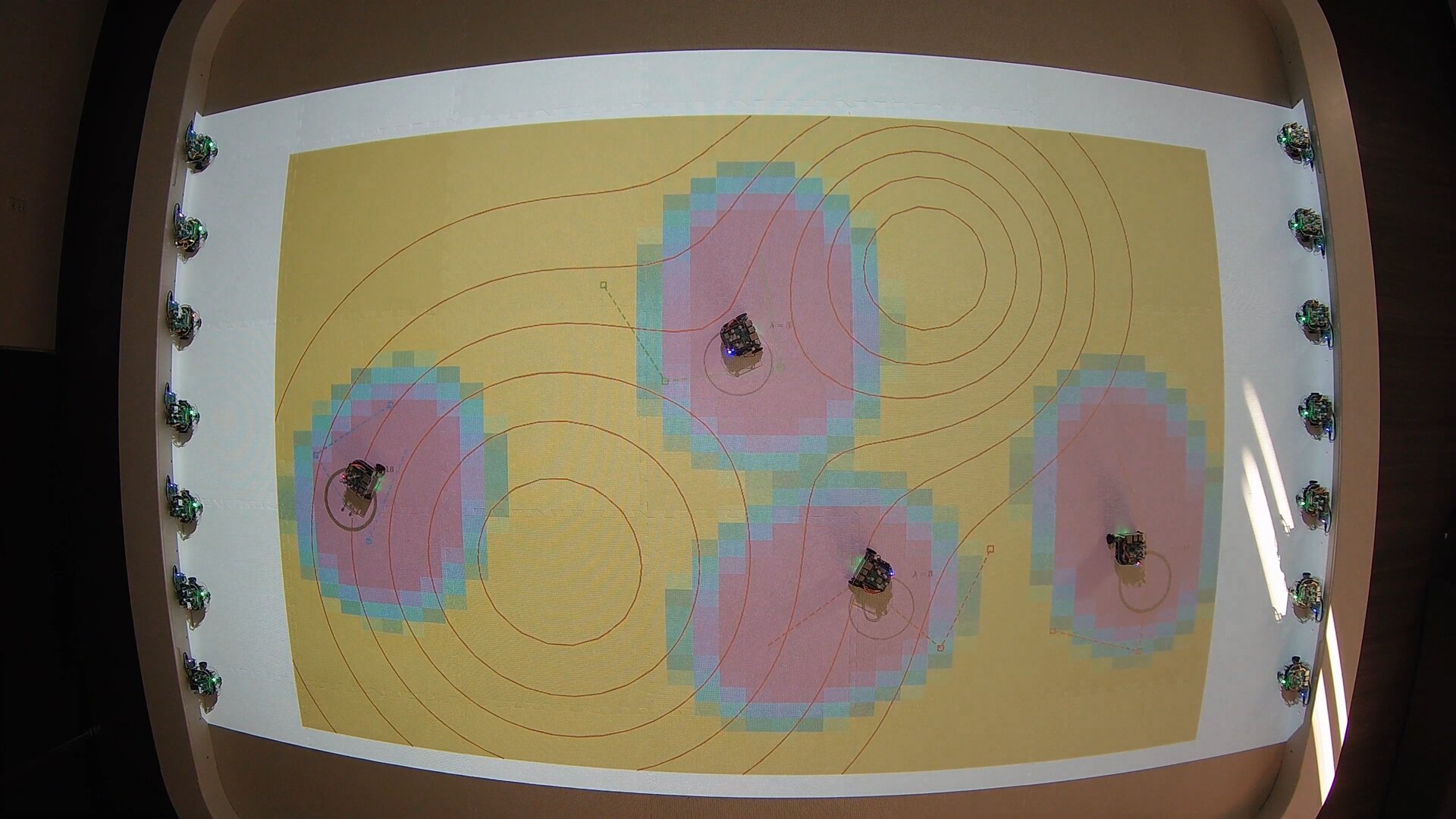}
        \caption*{$t = 5$\,s}
        \label{fig:t0}
    \end{subfigure}
    \hfill 
    \begin{subfigure}[b]{0.31\textwidth}
        \centering
        \includegraphics[width=\textwidth]{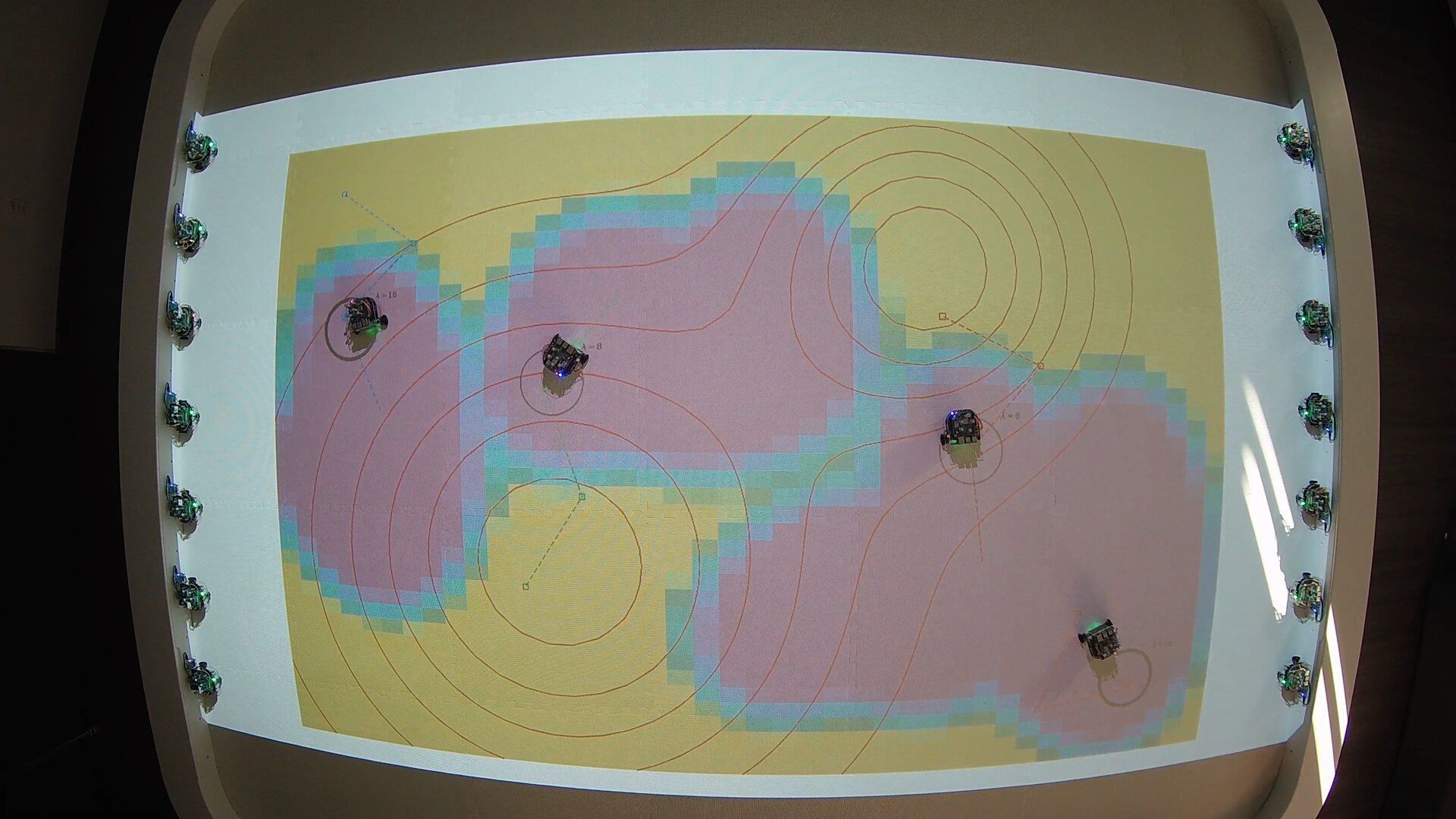}
        \caption*{$t = 10$\,s}
        \label{fig:t1}
    \end{subfigure}
    \hfill
    \begin{subfigure}[b]{0.31\textwidth}
        \centering
        \includegraphics[width=\textwidth]{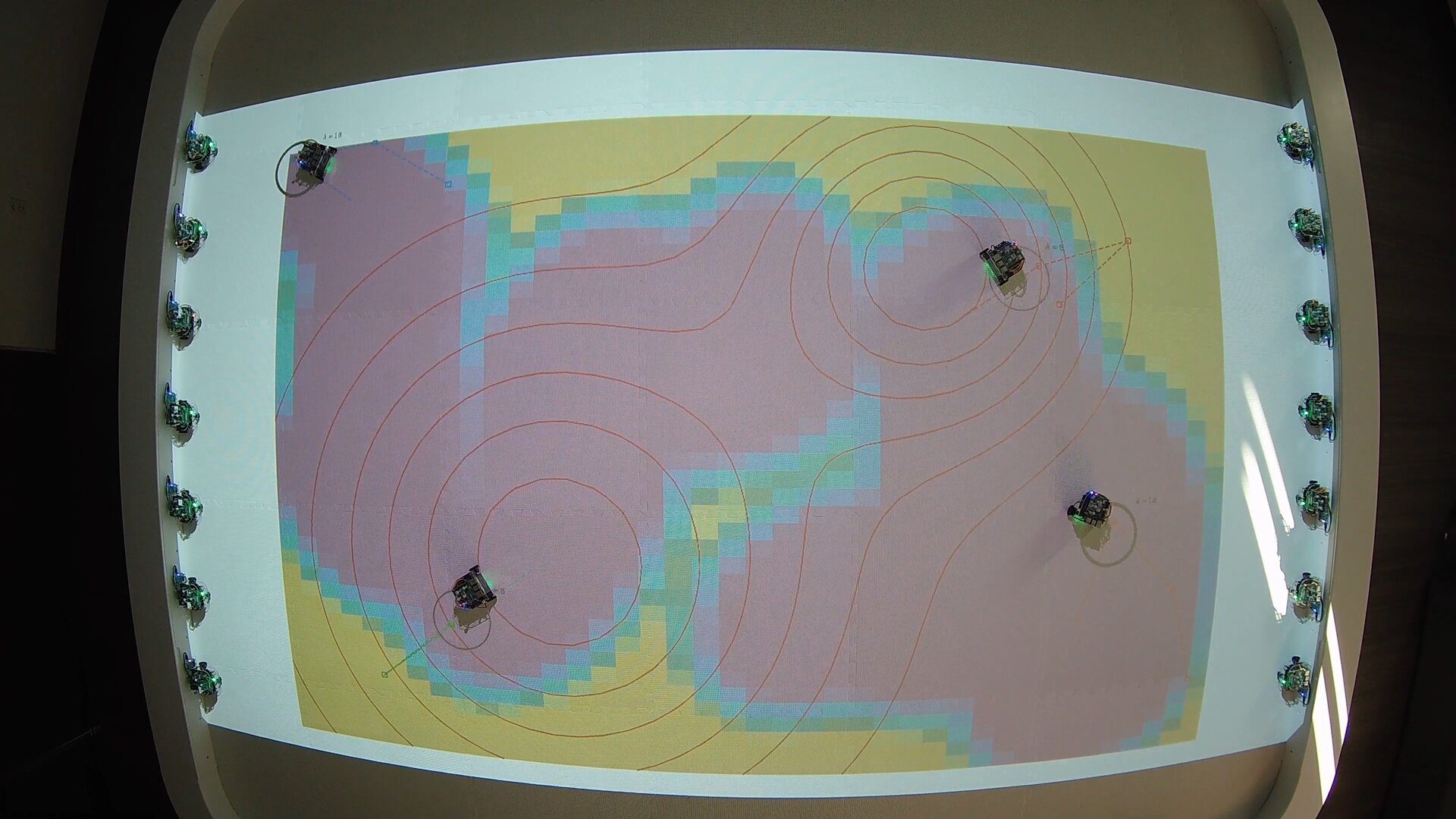}
        \caption*{$t = 15$\,s}
        \label{fig:t2}
    \end{subfigure}
    
    \vspace{0.3cm} 
    \par           
    
    \begin{subfigure}[b]{0.31\textwidth}
        \centering
        \includegraphics[width=\textwidth]{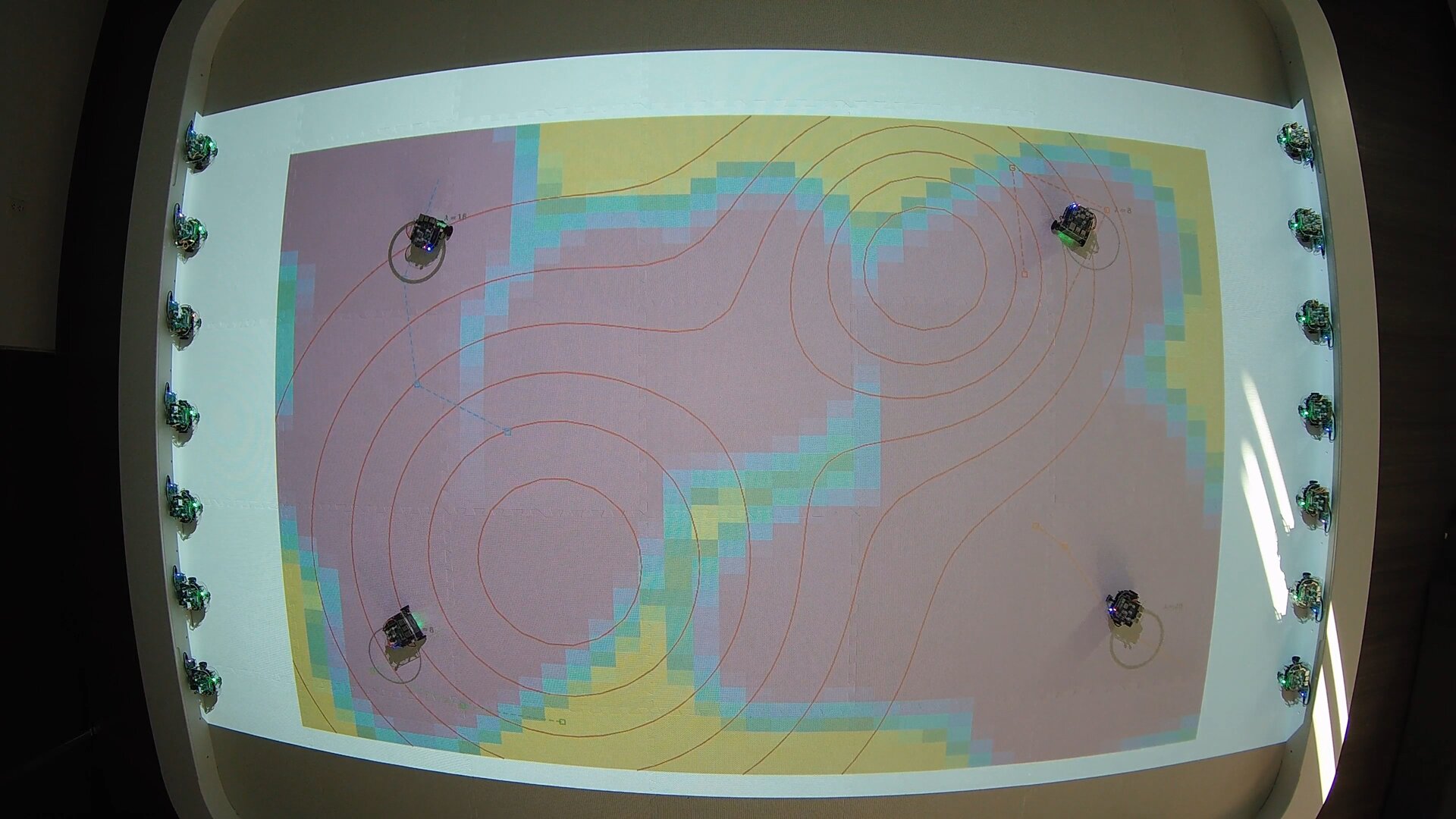}
        \caption*{$t = 20$\,s}
        \label{fig:t3}
    \end{subfigure}
    \hfill
    \begin{subfigure}[b]{0.31\textwidth}
        \centering
        \includegraphics[width=\textwidth]{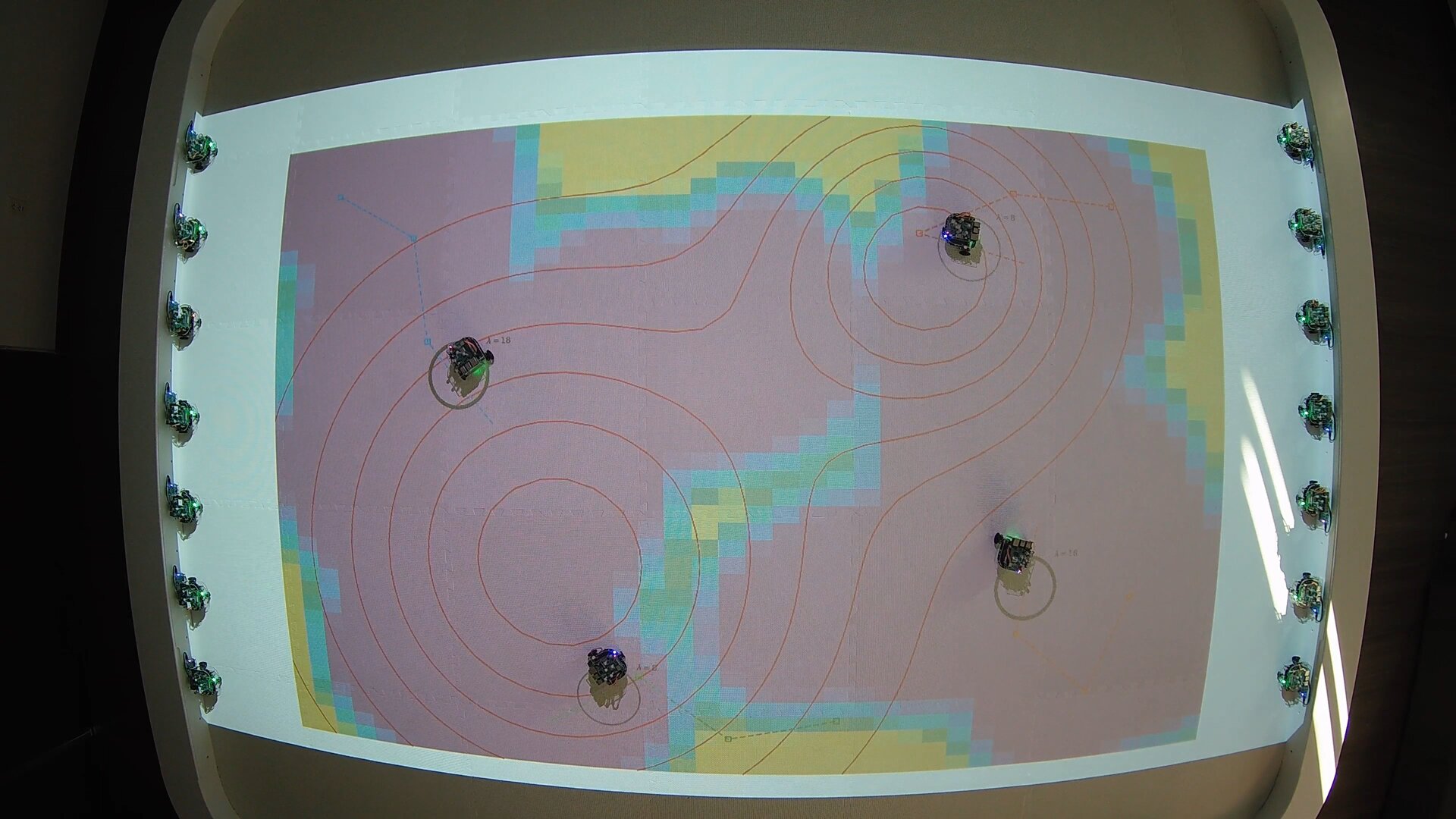}
        \caption*{$t = 25$\,s}
        \label{fig:t4}
    \end{subfigure}
    \hfill
    \begin{subfigure}[b]{0.31\textwidth}
        \centering
        \includegraphics[width=\textwidth]{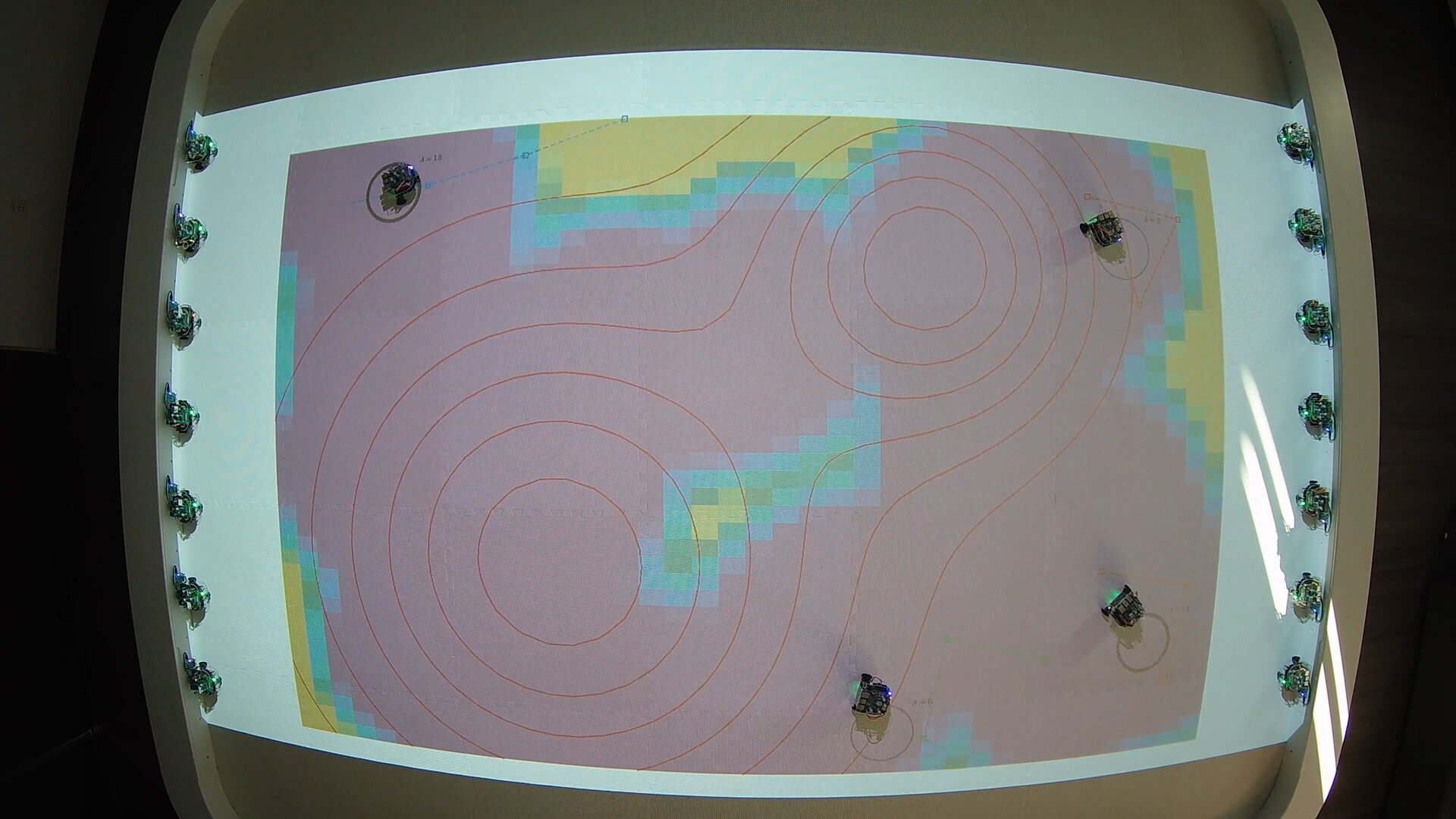}
        \caption*{$t = 30$\,s}
        \label{fig:t5}
    \end{subfigure}

    \caption{Robotarium execution of the altruistic risk-aware exploration planner. The projected visualization shows the online uncertainty estimate together with contour lines for the known hazard field. At each replanning round, the high-level planner produces waypoint trajectories, which are tracked by Robotarium single-integrator controllers with boundary-aware barrier certificates. Over time, the robots reduce uncertainty across the arena while maintaining safe separation and respecting workspace boundaries.}
\label{fig:robotarium}
\end{figure*}

\section{Conclusion}

This paper presents an altruistic game-theoretic framework for risk-aware multi-robot exploration with heterogeneous agents. By shaping each robot's utility with value-dependent relatedness weights, the proposed method allows agents to account for the effect of their decisions on teammates while preserving a decentralized trajectory-optimization structure. In the continuous exploration setting, this framework was implemented with smooth uncertainty, redundancy, and hazard models, enabling projected gradient updates over finite-horizon waypoint trajectories.

The experimental results demonstrate that altruistic utility shaping changes how risk and information-gathering effort are allocated across the team. Compared with the selfish baseline, the altruistic planner produced more spatially separated trajectories, reduced redundant exploration, and in several heterogeneous value configurations lowered value-weighted risk while maintaining comparable uncertainty reduction. Robotarium experiments further showed that the resulting waypoint plans can be executed using standard single-integrator controllers and barrier certificates, with online uncertainty updates from measured robot motion.

Future work will extend the framework to richer robot dynamics, partially known or learned hazard fields, and larger teams with communication constraints. Another important direction is to study adaptive relatedness weights that depend not only on fixed agent value, but also on mission context, remaining energy, robot health, or task-critical capabilities.

\normalem
\bibliographystyle{IEEEtran}
\bibliography{references}

\begin{thebibliography}{10}
\providecommand{\url}[1]{#1}
\csname url@samestyle\endcsname
\providecommand{\newblock}{\relax}
\providecommand{\bibinfo}[2]{#2}
\providecommand{\BIBentrySTDinterwordspacing}{\spaceskip=0pt\relax}
\providecommand{\BIBentryALTinterwordstretchfactor}{4}
\providecommand{\BIBentryALTinterwordspacing}{\spaceskip=\fontdimen2\font plus
\BIBentryALTinterwordstretchfactor\fontdimen3\font minus \fontdimen4\font\relax}
\providecommand{\BIBforeignlanguage}[2]{{%
\expandafter\ifx\csname l@#1\endcsname\relax
\typeout{** WARNING: IEEEtran.bst: No hyphenation pattern has been}%
\typeout{** loaded for the language `#1'. Using the pattern for}%
\typeout{** the default language instead.}%
\else
\language=\csname l@#1\endcsname
\fi
#2}}
\providecommand{\BIBdecl}{\relax}
\BIBdecl

\bibitem{hazon2008redundancy}
N.~Hazon and G.~A. Kaminka, ``On redundancy, efficiency, and robustness in coverage for multiple robots,'' \emph{Robotics and Autonomous Systems}, vol.~56, no.~12, pp. 1102--1114, 2008.

\bibitem{yan2013survey}
Z.~Yan, N.~Jouandeau, and A.~A. Cherif, ``A survey and analysis of multi-robot coordination,'' \emph{International Journal of Advanced Robotic Systems}, vol.~10, no.~12, p. 399, 2013.

\bibitem{lee2025distributed}
H.~Lee and D.~Panagou, ``Distributed resilience-aware control in multi-robot networks,'' in \emph{2025 IEEE 64th Conference on Decision and Control (CDC)}.\hskip 1em plus 0.5em minus 0.4em\relax IEEE, 2025, pp. 3868--3875.

\bibitem{wehbe2021probabilistic}
R.~Wehbe and R.~K. Williams, ``Probabilistic resilience of dynamic multi-robot systems,'' \emph{IEEE Robotics and Automation Letters}, vol.~6, no.~2, pp. 1777--1784, 2021.

\bibitem{hamilton1963evolution}
W.~D. Hamilton, ``The evolution of altruistic behavior,'' \emph{The American Naturalist}, vol.~97, no. 896, pp. 354--356, 1963.

\bibitem{endler1986natural}
J.~A. Endler, \emph{Natural Selection in the Wild}.\hskip 1em plus 0.5em minus 0.4em\relax Princeton University Press, 1986, no.~21.

\bibitem{shree2021exploiting}
V.~Shree, B.~Asfora, R.~Zheng, S.~Hong, J.~Banfi, and M.~Campbell, ``Exploiting natural language for efficient risk-aware multi-robot sar planning,'' \emph{IEEE Robotics and Automation Letters}, vol.~6, no.~2, pp. 3152--3159, 2021.

\bibitem{queralta2020collaborative}
J.~P. Queralta, J.~Taipalmaa, B.~C. Pullinen, V.~K. Sarker, T.~N. Gia, H.~Tenhunen, M.~Gabbouj, J.~Raitoharju, and T.~Westerlund, ``Collaborative multi-robot search and rescue: Planning, coordination, perception, and active vision,'' \emph{IEEE Access}, vol.~8, pp. 191\,617--191\,643, 2020.

\bibitem{drew2021multi}
D.~S. Drew, ``Multi-agent systems for search and rescue applications,'' \emph{Current Robotics Reports}, vol.~2, no.~2, pp. 189--200, 2021.

\bibitem{baxter2007multi}
J.~L. Baxter, E.~Burke, J.~M. Garibaldi, and M.~Norman, ``Multi-robot search and rescue: A potential field based approach,'' in \emph{Autonomous Robots and Agents}.\hskip 1em plus 0.5em minus 0.4em\relax Springer, 2007, pp. 9--16.

\bibitem{xiao2020robot}
X.~Xiao, J.~Dufek, and R.~R. Murphy, ``Robot risk-awareness by formal risk reasoning and planning,'' \emph{IEEE Robotics and Automation Letters}, vol.~5, no.~2, pp. 2856--2863, 2020.

\bibitem{di2022risk}
K.~Di, Y.~Zhou, J.~Jiang, F.~Yan, S.~Yang, and Y.~Jiang, ``Risk-aware collection strategies for multirobot foraging in hazardous environments,'' \emph{ACM Transactions on Autonomous and Adaptive Systems (TAAS)}, vol.~16, no. 3-4, pp. 1--38, 2022.

\bibitem{jiang2022risk}
L.~Jiang and Y.~Wang, ``Risk-aware decision-making in human-multi-robot collaborative search: a regret theory approach,'' \emph{Journal of Intelligent \& Robotic Systems}, vol. 105, no.~2, p.~40, 2022.

\bibitem{zhou2022risk}
L.~Zhou and P.~Tokekar, ``Risk-aware submodular optimization for multirobot coordination,'' \emph{IEEE Transactions on Robotics}, vol.~38, no.~5, pp. 3064--3084, 2022.

\bibitem{jadidi2015mutual}
M.~G. Jadidi, J.~V. Miro, and G.~Dissanayake, ``Mutual information-based exploration on continuous occupancy maps,'' in \emph{2015 IEEE/RSJ International Conference on Intelligent Robots and Systems (IROS)}.\hskip 1em plus 0.5em minus 0.4em\relax IEEE, 2015, pp. 6086--6092.

\bibitem{carrillo2015autonomous}
H.~Carrillo, P.~Dames, V.~Kumar, and J.~A. Castellanos, ``Autonomous robotic exploration using occupancy grid maps and graph slam based on shannon and r{\'e}nyi entropy,'' in \emph{Proceedings of the International Conference on Robotics and Automation (ICRA)}.\hskip 1em plus 0.5em minus 0.4em\relax IEEE, 2015, pp. 487--494.

\bibitem{julian2014mutual}
B.~J. Julian, S.~Karaman, and D.~Rus, ``On mutual information-based control of range sensing robots for mapping applications,'' \emph{The International Journal of Robotics Research}, vol.~33, no.~10, pp. 1375--1392, 2014.

\bibitem{guo2023patroller}
H.~Guo, Q.~Kang, W.-Y. Yau, M.~H. Ang, and D.~Rus, ``Em-patroller: Entropy maximized multi-robot patrolling with steady state distribution approximation,'' \emph{IEEE Robotics and Automation Letters}, vol.~8, no.~9, pp. 5712--5719, 2023.

\bibitem{duan2026maximizing}
X.~Duan, W.~Wang, and R.~Yan, ``Maximizing markov trajectory entropy under kemeny constraints for robotic surveillance,'' \emph{IEEE Transactions on Automatic Control}, 2026.

\bibitem{lavalle1993game}
S.~M. LaValle and S.~Hutchinson, ``Game theory as a unifying structure for a variety of robot tasks,'' in \emph{Proceedings of 8th IEEE international symposium on intelligent control}.\hskip 1em plus 0.5em minus 0.4em\relax IEEE, 1993, pp. 429--434.

\bibitem{huo2023task}
X.~Huo, H.~Zhang, C.~Huang, Z.~Wang, and H.~Yan, ``Task allocation with minimum requirements for multiple mobile robot systems: A game-theoretical approach,'' \emph{IEEE Transactions on Network Science and Engineering}, vol.~11, no.~1, pp. 1202--1213, 2023.

\bibitem{emery2005game}
R.~Emery-Montemerlo, G.~Gordon, J.~Schneider, and S.~Thrun, ``Game theoretic control for robot teams,'' in \emph{Proceedings of the 2005 IEEE International Conference on Robotics and Automation}.\hskip 1em plus 0.5em minus 0.4em\relax IEEE, 2005, pp. 1163--1169.

\bibitem{roldan2018CompeteOrCooperate}
J.~Jesús~Roldán, J.~Del~Cerro, and A.~Barrientos, ``Should we compete or should we cooperate? {A}pplying game theory to task allocation in drone swarms,'' in \emph{2018 IEEE/RSJ International Conference on Intelligent Robots and Systems (IROS)}, 2018, pp. 5366--5371.

\bibitem{xiao2024task}
H.~Xiao, Z.~Huang, Z.~Xu, S.~Yang, W.~Wang, L.~Zhong, and C.~Xu, ``Task-driven cooperative internet of robotic things crowdsourcing: From the perspective of hierarchical game theoretic,'' \emph{IEEE Internet of Things Journal}, vol.~11, no.~20, pp. 32\,350--32\,362, 2024.

\bibitem{karam2025resource}
R.~Karam, R.~Lin, B.~A. Butler, and M.~Egerstedt, ``Resource allocation with multi-team collaboration based on {H}amilton’s rule,'' in \emph{Proceedings of the IEEE Conference on Decision and Control (CDC)}.\hskip 1em plus 0.5em minus 0.4em\relax IEEE, 2025, pp. 6891--6898.

\bibitem{butlerCDC2025hamiltonsrule}
B.~A. Butler and M.~Egerstedt, ``{H}amilton's rule for enabling altruism in multi-agent systems,'' in \emph{2025 IEEE 64th Conference on Decision and Control (CDC)}, 2025, pp. 6776--6783.

\bibitem{deng1994ComplexityCoop}
X.~Deng and C.~H. Papadimitriou, ``On the {{Complexity}} of {{Cooperative Solution Concepts}},'' \emph{Mathematics of Operations Research}, vol.~19, no.~2, pp. 257--266, May 1994.

\bibitem{stirling2005social1}
W.~C. Stirling, ``Social utility functions-{Part I}: Theory,'' \emph{IEEE Transactions on Systems, Man, and Cybernetics, Part C (Applications and Reviews)}, vol.~35, no.~4, pp. 522--532, 2005.

\bibitem{stirling2005social2}
W.~C. Stirling and R.~L. Frost, ``Social utility functions-{Part II}: Applications,'' \emph{IEEE Transactions on Systems, Man, and Cybernetics, Part C (Applications and Reviews)}, vol.~35, no.~4, pp. 533--543, 2005.

\bibitem{bester1998altruism}
H.~Bester and W.~G{\"u}th, ``Is altruism evolutionarily stable?'' \emph{Journal of Economic Behavior \& Organization}, vol.~34, no.~2, pp. 193--209, 1998.

\bibitem{de2011altruistic}
G.~De~Marco and J.~Morgan, ``Altruistic behavior and correlated equilibrium selection,'' \emph{International Game Theory Review}, vol.~13, no.~04, pp. 363--381, 2011.

\bibitem{tobias2023rational}
{\'A}.~T{\'o}bi{\'a}s, ``Rational altruism,'' \emph{Journal of Economic Behavior \& Organization}, vol. 207, pp. 50--80, 2023.

\bibitem{ale2013evolution}
S.~B. Ale, J.~S. Brown, and A.~T. Sullivan, ``Evolution of cooperation: combining kin selection and reciprocal altruism into matrix games with social dilemmas,'' \emph{PloS one}, vol.~8, no.~5, p. e63761, 2013.

\bibitem{su2024relational}
R.~Su and B.~Morsky, ``Relational utility and social norms in games,'' \emph{Mathematical Social Sciences}, vol. 127, pp. 54--61, 2024.

\bibitem{bjorke1996framework}
J.~T. Bj{\o}rke, ``Framework for entropy-based map evaluation,'' \emph{Cartography and Geographic Information Systems}, vol.~23, no.~2, pp. 78--95, 1996.

\bibitem{o2012gaussian}
S.~T. O’Callaghan and F.~T. Ramos, ``Gaussian process occupancy maps,'' \emph{The International Journal of Robotics Research}, vol.~31, no.~1, pp. 42--62, 2012.

\bibitem{nagami2026vista}
K.~Nagami, T.~Chen, J.~Yu, O.~Shorinwa, M.~Adang, C.~Dougherty, E.~Cristofalo, and M.~Schwager, ``Vista: Open-vocabulary, task-relevant robot exploration with online semantic gaussian splatting,'' \emph{IEEE Robotics and Automation Letters}, 2026.

\end{thebibliography}

\begin{IEEEbiography}[{\includegraphics[width=1in,height=1.25in,
clip,keepaspectratio]{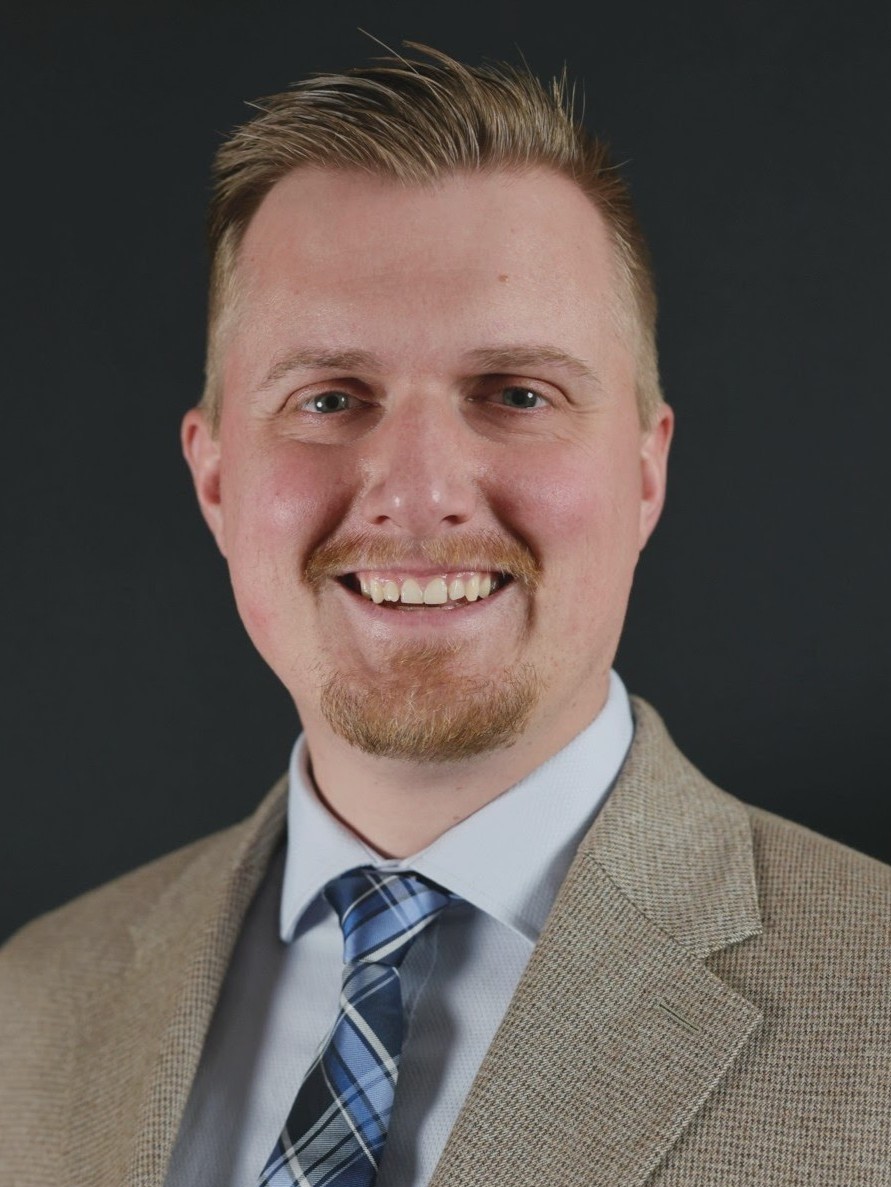}}]{Brooks A. Butler} (Member, IEEE) is an Assistant Professor in the College of Electrical and Computer Engineering at Oklahoma State University in Stillwater, OK. He was the recipient of the Intelligence Community Postdoctoral Research Fellowship, administered by Oak Ridge Institute for Science and Education (ORISE) through an interagency agreement between the U.S. Department of Energy and the Office of the Director of National Intelligence (ODNI), hosted at the University of California, Irvine. He received his Ph.D. in Electrical and Computer Engineering from Purdue University in 2024. He received his M.S. in Computer Science and his B.S. in Applied Physics from Brigham Young University in 2020 and 2019, respectively.
\end{IEEEbiography}

\begin{IEEEbiography}[{\includegraphics[width=1in,height=1.25in,
clip,keepaspectratio]{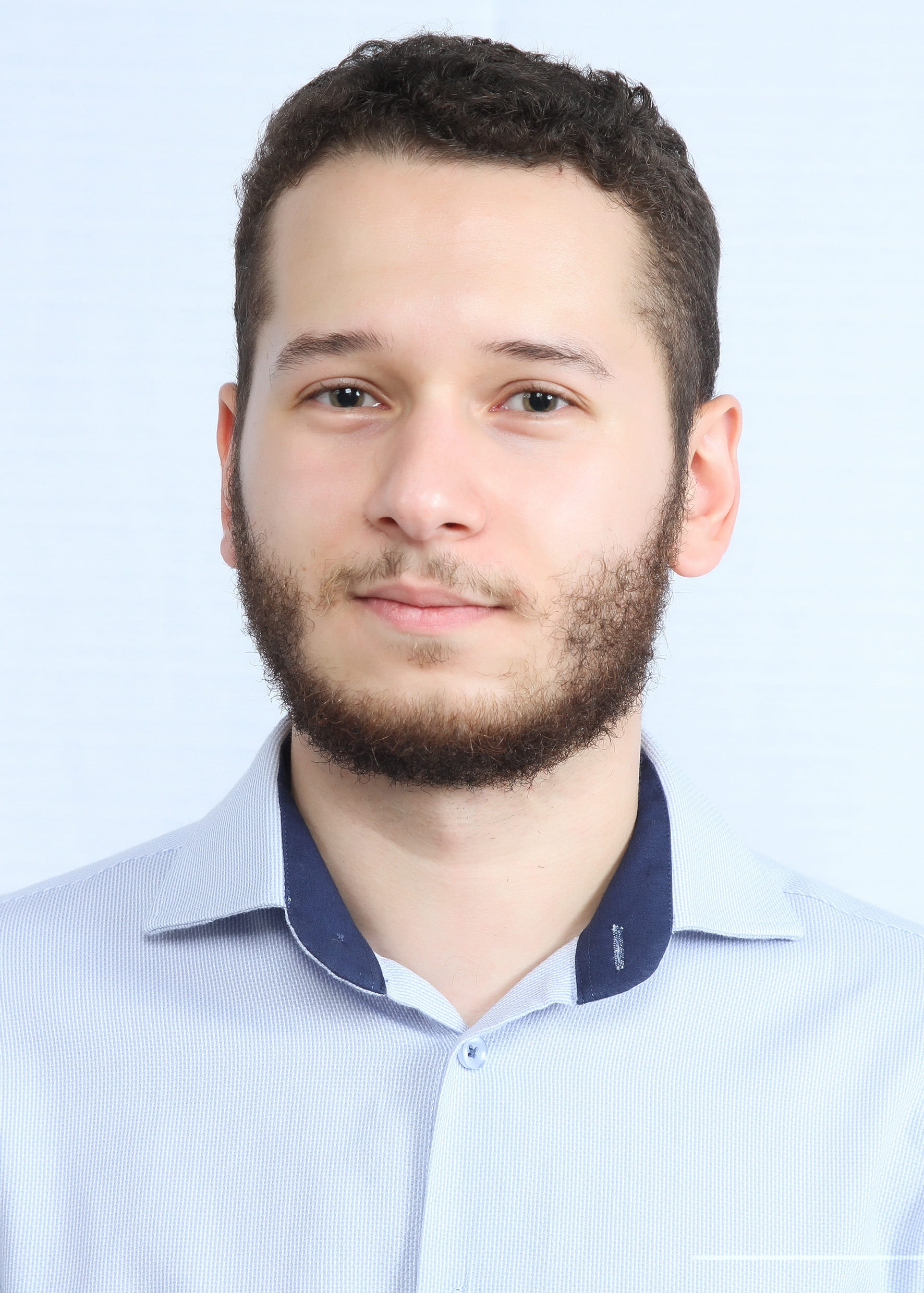}}]{Jair Cert\'orio} (Member, IEEE) is an Assistant Professor in the Systems Engineering Division at Instituto Tecnol\'ogico de Aeron\'autica, Fortaleza, CE, Brazil. 
He received his B.S. degree in Electrical Engineering from the Universidade Estadual de Maringá, PR, Brazil, in 2018, and his Ph.D. degree in Electrical Engineering from the University of Maryland, College Park, MD, USA, in 2025.
Between 2025 and 2026, he was a Postdoctoral Scholar in the Department of Electrical and Computer Engineering at the University of California, Santa Barbara.
\end{IEEEbiography}

\begin{IEEEbiography}[{\includegraphics[width=1in,height=1.25in,
clip,keepaspectratio]{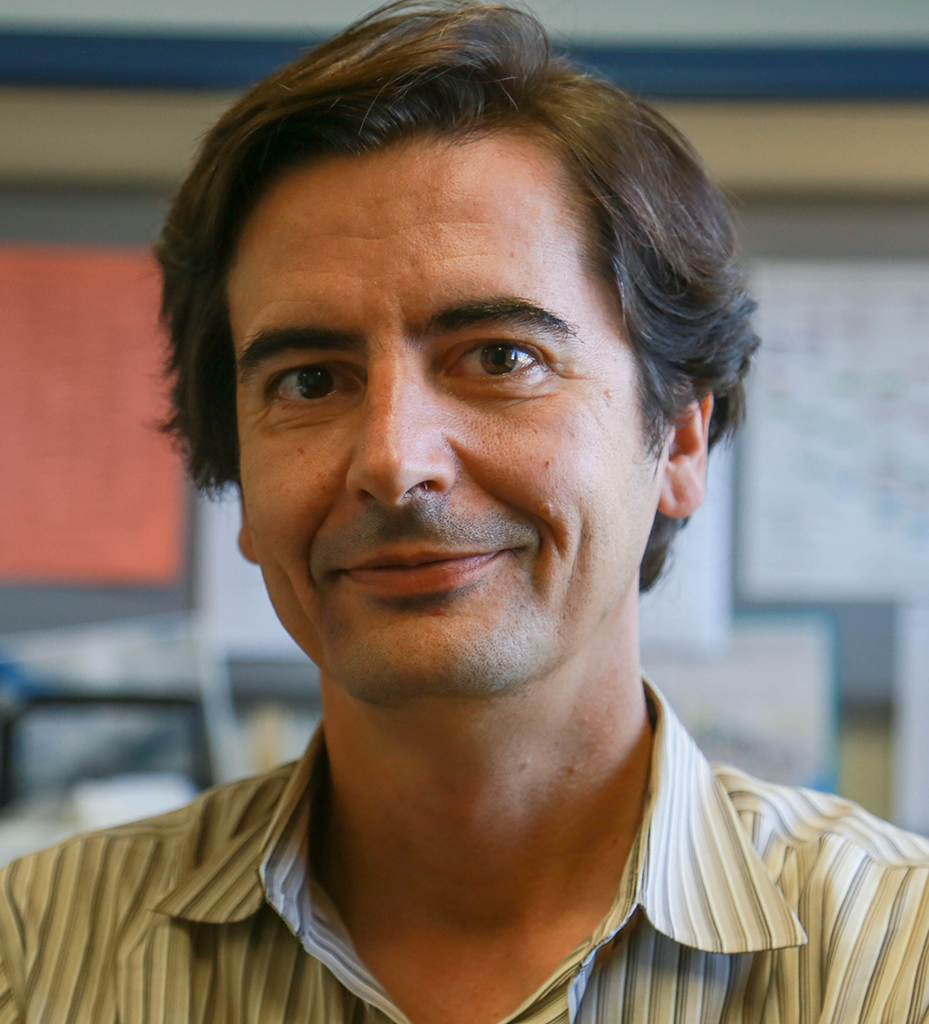}}]{Jo\~ao Hespanha} (Fellow, IEEE) is a Distinguished Professor in the Department of Electrical and Computer Engineering at 
the University of California, Santa Barbara. He received the Ph.D.~degree in electrical engineering and applied science from Yale University, New Haven, Connecticut in 1998. Dr. Hespanha is a Fellow of IEEE and IFAC. His current research interests include hybrid and switched systems; multi-agent control systems; game theory; optimization; distributed control over communication networks (also known as networked control systems); the use of vision in feedback control; stochastic modeling in biology; and network security.
\end{IEEEbiography}


\begin{IEEEbiography}[{\includegraphics[width=1in,height=1.25in,
clip,keepaspectratio]{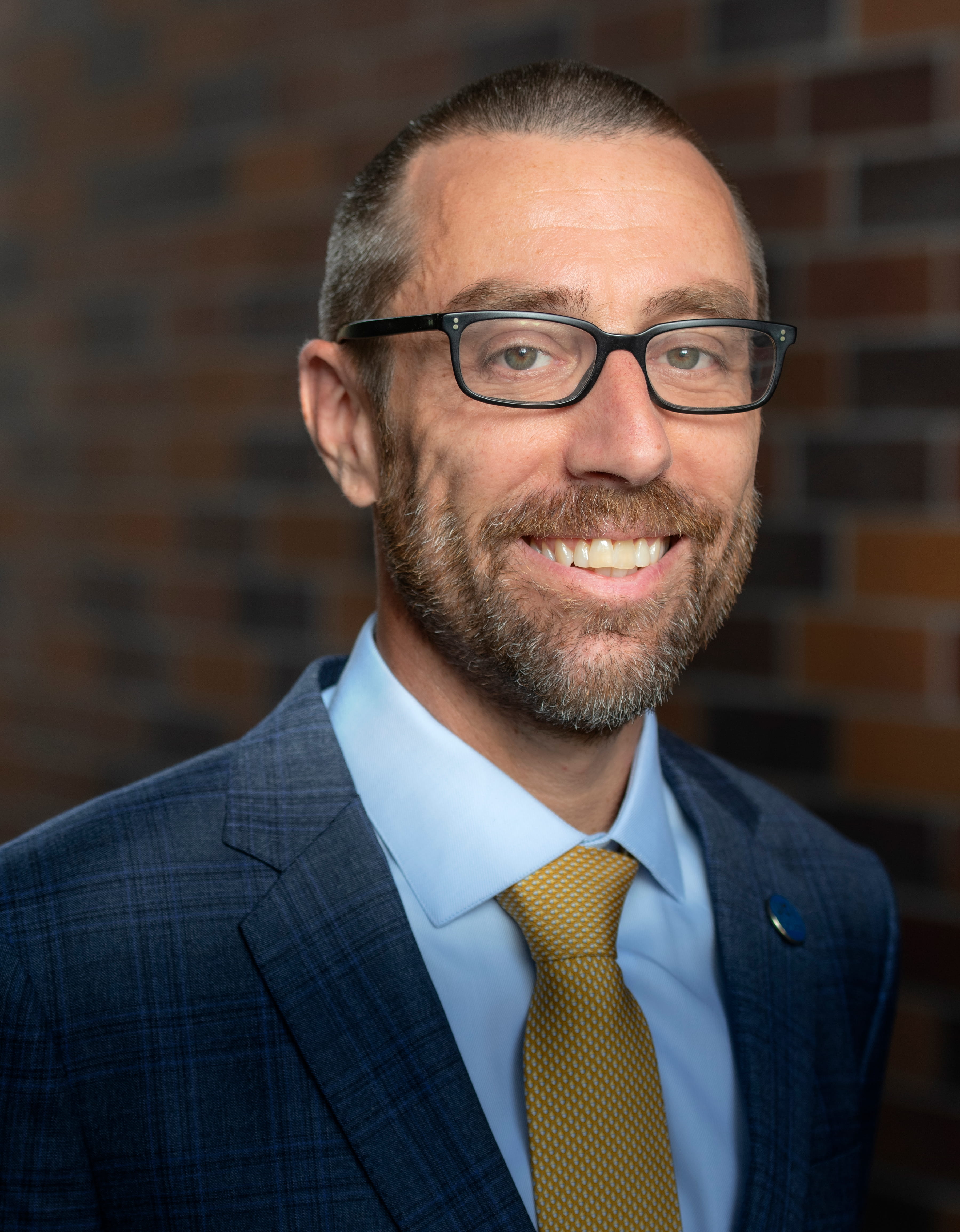}}]{Magnus Egerstedt} (Fellow, IEEE)
is the Provost at the University of North Carolina, Chapel Hill, where he holds the Harry L. Watson Distinguished Professorship in Computer Science. Egerstedt received the M.S. degree in Engineering Physics and the Ph.D. degree in Applied Mathematics from the Royal Institute of Technology, Stockholm, Sweden, the B.A. degree in Philosophy from Stockholm University, and he conducts research in the areas of control theory and robotics, with particular focus on control and coordination of multi-robot systems. Magnus Egerstedt is a Fellow of IEEE and IFAC, a member of the Royal Swedish Academy of Engineering Science, and a past president of the IEEE Control Systems Society.
\end{IEEEbiography}

\end{document}